\documentclass[twocolumn, 10pt]{article}
\usepackage{geometry}
[        a4paper,
        paperheight=11in,
		% Like in '99
		textwidth=18cm,
		textheight=23.5cm,
		twocolumn,
		columnsep=7mm,
		centering,
		headsep=.6cm,
]
\usepackage{amsmath, amsthm, amssymb, amsfonts}

\usepackage{stix2}[lcgreekalpha]
\usepackage{tikz} %
\usepackage{booktabs} %
\usepackage{fvextra} %
\usepackage{array} %
\usepackage{tabularx} %
\usepackage{soul}

\newenvironment{crequirement}[1]
{\innercustom}
  {\endinnercustom}

\newcommand{\bR}{R}
\newcommand{\bH}{S}

\usepackage{tcolorbox}

%
%
% counting per section for easier searchability
\newtheorem{proposition}{Proposition}[section]
\newtheorem{lemma}{Lemma}[section]
\newtheorem{corollary}{Corollary}[section]
\theoremstyle{definition}
\newtheorem{definition}{Definition}[section]
\newtheorem{example}{Example}[section]
\usepackage{authblk}

\title{The AI off-switch problem as a signalling game: bounded rationality and incomparability}

\author[1]{Alessio Benavoli}
\author[2]{Alessandro Facchini}
\author[2]{Marco Zaffalon}

\affil[1]{School of Computer Science and Statistic, Trinity College Dublin,  Ireland}
\affil[2]{SUPSI, IDSIA - Dalle Molle Institute for Artificial Intelligence, Lugano, Switzerland.}

\begin{document}

\maketitle

\begin{abstract}
The off-switch problem is a critical challenge in AI control: if an AI system resists being switched off, it poses a significant risk. In this paper, we model the off-switch problem as a signalling game, where a  human decision-maker  communicates its preferences about some underlying decision problem to an AI agent, which then selects actions to maximise the human's utility. We assume that the human is a bounded rational agent and explore various bounded rationality mechanisms. Using real machine learning models, we reprove prior  results and demonstrate that a necessary condition for an AI system to refrain from disabling its off-switch is its uncertainty about the human's utility. We also analyse how message costs influence optimal strategies and extend the analysis to scenarios involving incomparability.
\end{abstract}

\section{Introduction}
With the rapid advancements in AI, the challenge of designing systems that are beneficial to humans is becoming increasingly critical. In the rest of the paper, we will refer to an AI system as a \emph{robot} and assume that it is an agent seeking to maximise a utility function $\nu$.
A central concern in AI safety is ensuring that the utility function 
$\nu$ aligns with human interests. If a robot's goals conflict with human values, it could make harmful or even adversarial decisions. Moreover, the robot might resist any attempts by its human creators to modify its utility function or allow them to switch it off

To ensure alignment with human values, from the perspective of utilitarianism\footnote{This is the normative moral theoretical perspective adopted in the works concerned in this article, see e.g. \cite{russell2019human}. The fact that we are restricting to this perspective in our paper does not implies its endorsement (we may actually find such restriction problematic, but a philosophical analysis of the general problem of AI alignment is not the subject of this paper).}, robots should be designed to maximise humans' utilities (preferences) rather than pursue their own goals. This led \cite{russell2019human} to introduce the concept of \textit{AI assistance game}, where an AI agent simply becomes a personal AI assistant. In this setting, AI alignment can be formalised through three key principles \cite{russell2019human}:
\begin{description}
\item[P1:] The Robot's only objective is to maximise the realisation of
human preferences.
\item[P2:] The Robot is initially uncertain about what those preferences are.
\item[P3:] The ultimate source of information about human preferences is human behaviour.
\end{description}
However, designing a Robot to follow these principles does not guarantee that we will be able to control it. The ability to control the Robot ultimately depends on our capacity to switch it off. Will the Robot allow that?
The off-switch problem lies at the heart of the control challenge for AI systems. If a machine cannot be switched off because it actively prevents us from doing so, it poses a serious threat.  In a seminal paper  \cite{hadfield2017off} formulated the \text{off-switch problem} as a game between a Robot (R) and a Human (S for Sapiens).
$\bR$ has three possible actions: (1) take an immediate decision (about some underlying decision problem); (2) defer the decision to $\bH$; (3) do nothing. If $\bR$ selects actions (2), then $\bH$ can either allow $\bR$ to implement the decision or switch $\bH$ off.
The authors in \cite{hadfield2017off} showed that the best action for $\bR$ depends on $\bR$'s uncertainty about  $\bH$'s utility and the rationality of $\bH$.
If $\bR$ is too certain about what $\bH$ wants, and
it knows $\bH$ to be `irrational', then it will have less incentive to defer to $\bH$ (or, equivalently, to allow $\bH$ to switch it off).

The analysis of the off-switch game by \cite{hadfield2017off} was not fully developed within a formal game-theoretic framework. To address this gap, \cite{wangberg2017game} reformulated the off-switch problem as a Bayesian game with incomplete information. In this formulation, Nature determines whether $\bR$ is uncertain with probability $p_u$ and whether $\bH$ is rational with probability $p_r$. Although this is a more rigorous formulation of the off-switch problem from a game-theoretic perspective, it introduces an artificial setting. For example, in \cite{wangberg2017game}, a non-rational  human is modelled as an agent minimising utility, which is a counterintuitive assumption. Moreover, the formulation in \cite{wangberg2017game} does not really model the AI-assistance game principles P1--P3, since $\bR$  does not learn  $\bH$'s preferences.

In this paper, we revisit the off-switch problem and model it more correctly as a \textit{signalling game}. Signalling games \cite{spence1974market,gibbons1992game} specifically refer to a class of two-player Bayesian games with incomplete information, where one player ($\bH$ in our case) possesses private information, while the other ($\bR$ in our case) does not. The informed player shares information with the uniformed one through a message.
We make the assumption that, in the off-switch problem, the messages are $\bH$'s preferences about some underlying decision problem. $\bR$ uses these preferences to learn $\bH$'s utilities and then chooses its optimal action.

In this setting, $\bR$'s uncertainty arises statically from the task of learning from preferences. For $\bH$, we adopt the more realistic assumption that $\bH$ is a \textit{bounded rational} agent, that is an agent who behaves rationally within the limits of their cognitive abilities. We consider different \textit{bounded rationality} mechanisms.

We reprove the results of \cite{hadfield2017off} in this setting using real machine learning models to learn from preferences. We additionally show that  $\bH$ does not have any incentive to lie when sending a message to $\bR$. Furthermore,  we also discuss how the cost of sending a message affects the optimal strategy in the game. 
Both \cite{hadfield2017off} and \cite{wangberg2017game}  consider a single-utility scenario. In contrast, we also analyse scenarios involving incomparability. Such incomparability may arise due to bounded rationality or because $\bH$ has multiple utility functions in mind.

\section{Preliminary}
We assume that the human $\bH$ and the robot $\bR$ aim to maximise $\bH$'s preferences of an underlying decision problem.
We consider a standard set up for decision-making \cite{savage1972foundations} consisting of three key components: 1) a set $\mathcal{S}$ of finite states of the world; 2) a set $\mathcal{O}$ of outcomes; and 3) a set $\mathcal{X}$ of acts, which are mappings from states to outcomes. $\bH$ expresses preferences over acts. We use the term preferences here in a more colloquial sense; it is, in fact, more realistic to assume that we can only observe choices over acts. Therefore, we can represent this decision making problem as a tuple $(\mathcal{S},\mathcal{O},\mathcal{X},C)$, where $C:\mathcal{P}(\mathcal{X})\rightarrow \mathcal{P}(\mathcal{X})$ is a choice function  ($\mathcal{P}(\mathcal{X})$ being the power-set of $\mathcal{X}$). 
\begin{example}
Assume $\bH$'s objective is to make a very good risotto. The outcome  is the taste of the risotto, which is determined by its recipe. The decision-making problem is to find the best recipe. In this context, the state of the world can represent factors such as the quality of the ingredients, e.g., whether they are good or bad, which may significantly influence the resulting taste. Therefore, an act is represented by a vector specifying the recipe for each possible state of the world. If we assume that a recipe is a vector of $\mathbb{R}^c$, where $c$ is the number of ingredients, then an act ${z}$ is a vector in $\mathbb{R}^{2c}$, assuming only two states of the world good/bad. Then $\bH$ expresses choices over the powerset of a finite subset   of $\mathbb{R}^{2c}$ (that is, over  subsets of recipes).
\end{example}
Hereafter, we also include examples of more standard instantiations of this problem, commonly considered in the foundations of decision-making under uncertainty. \textit{Desirable gambles:} $\mathcal{S}=\Omega$ is a possibility space of an uncertain experiment (for instance, tossing a coin); $\mathcal{O}=\mathbb{R}$ and $\mathcal{X}=\mathbb{R}^{|\Omega|}$ is the set of all gambles \cite{walley1991statistical}. \textit{Anscombe-Aumann:} $\mathcal{S}=\Omega$; $\mathcal{O}=D(\mathcal{O})$ is the set of simple probability distributions on a finite set of prizes $\mathcal{O}$ and $\mathcal{X} \subset \mathbb{R}^{|\Omega|\times |\mathcal{O}|}$ \cite{anscombe1963definition}.
A connection between these two settings is proven in \cite{zaffalon2017axiomatising,zaffalon2018desirability}.

For a given decision problem $(\mathcal{S},\mathcal{O},\mathcal{X},C)$, \textit{rationality} of the decision-maker can be imposed by enforcing the choice function $C$ to satisfy a set of constraints \cite{arrow1959rational,sen1994formulation,aizerman1981general}. A minimal rationality requirement is usually:
\begin{equation}
\label{eq:Path}
    C(A \cup B)= C(C(A)\cup B),
\end{equation}
for all $A,B \subseteq \mathcal{P}(\mathcal{X})$, which is known as \textit{path-independence} property \cite{plott1973path}. Other constraints can be added depending on the specific decision-making problem \cite{seidenfeld2010coherent,de2019interpreting,van2018coherent} (including topological constraints).
The choice mechanisms (procedures) are typically defined by specifying a rule that determines how to identify $C(A)$ for each input set $A$.
A first example is  the `scalar optimisation choice' \cite{aizerman1981general} defined as:
\begin{equation}
\label{eq:scalaru}
 C(A)=\{z \in A: \text{there is no $y \in A$ s.t.\ }  \nu(y) > \nu(z)\},   
\end{equation}
that is, the choice function is defined by  $\nu:\mathcal{X} \rightarrow \mathbb{R}$.
We refer to $\nu$ as utility function. Another example is the  `vector optimisation choice' defined through the following Pareto-dominance criterion
\begin{equation}
\label{eq:pareto}
 C(A)=\{z \in A: \text{there is no $y \in A$ s.t.\ }  \boldsymbol{\nu}(y) > \boldsymbol{\nu}(z)\},
\end{equation}
where $\boldsymbol{\nu}:\mathcal{X} \rightarrow \mathbb{R}^d$ is a vectorial function.  The logic of the choice mechanisms \eqref{eq:scalaru} and \eqref{eq:pareto} is based on pairwise-comparisons. A choice function which is not defined by pairwise comparisons is:
 \begin{equation}
\label{eq:pseudoratio}
    C(A)=\left\{\bigcup\limits_{k=1,\dots,d} \arg\max_{z \in A} \nu_k(z)\right\}.
 \end{equation}
 The definitions \eqref{eq:pareto} and \eqref{eq:pseudoratio} coincide with \textit{maximality} and \textit{e-admissibility} when applied to gambles \cite{seidenfeld2010coherent,de2020archimedean}.
Each one of these representations defines choice functions satisfying different `rationality constraints'. For instance, all these three choice functions satisfy  property \eqref{eq:Path}.\footnote{This holds when $\mathcal{X}$ is finite. In the infinite case, additional assumptions are needed.}
We refer to the 
choice function in \eqref{eq:scalaru} as a binary preference.

Note that, here, we do not assume state-independence; therefore, the utility function $\boldsymbol{\nu}$ is generally state-dependent \cite{schervish1990state}. 

\subsection{Learning from choices under rationality}
\label{sec:learning}

The problem of learning from choices generalises the concept of learning from preferences. Given observations in the form of choices
\begin{equation}
\label{eq:choiciedata}
\mathcal{D}=(A_i,C(A_i))_{i=1}^n,
\end{equation}
and assuming the choice function satisfies rationality criteria such as \eqref{eq:Path}, then it can be represented through a mechanism such as \eqref{eq:scalaru}--\eqref{eq:pseudoratio}. In this case, learning a choice function reduces to the problem of learning a utility function vector \cite{benavoli2023learning,benavoli2024tutorial}. Since the unknown is a function we can use a Gaussian Process (GP) to learn $\boldsymbol{\nu}$ \cite{benavoli2023learning,benavoli2024tutorial}. In this section, we assume that for each $y,z \in \mathcal{X}$ then $\boldsymbol{\nu}(y)\neq \boldsymbol{\nu}(z)$. This assumption prevents issues with zero probability when calculating the posterior, as discussed in \cite{benavoli2024tutorial}. As a result, for instance, the choice function defined in \eqref{eq:scalaru} 
is such that $C(A)$ contains only a single element for each set $A$. In the next section, we will generalise this setting by introducing a limit of discernibility.

To learn a choice function, we  assume a GP prior on the latent unknown utility vector function:
\begin{equation}
\label{eq:multiprior3prior}
\boldsymbol{\nu}(z)=\begin{bmatrix}
\nu_1(z) \\ 
\nu_2(z)\\ 
\vdots \\
\nu_d(z)\\ 
\end{bmatrix} \sim GP\left(\mu_0(z),K_0(z,z')\right), 
\end{equation}
where $\mu_0(z),K_0(z,z')$ are respectively, the prior mean and covariance functions (the parameters of the GP), and then use the data in \eqref{eq:choiciedata} and the rationality constraints, which constrain $\boldsymbol{\nu}$ (as for instance in \eqref{eq:pseudoratio}) to compute a posterior over $\boldsymbol{\nu}$. The posterior is not a GP, but we can approximate it with a GP using various approaches (see below), leading to the posterior
\begin{equation}
\label{eq:multiprior3post}
\boldsymbol{\nu}(z)=\begin{bmatrix}
\nu_1(z) \\ 
\nu_2(z)\\ 
\vdots \\
\nu_d(z)\\ 
\end{bmatrix} \sim GP\left(\mu_p(z),K_p(z,z')\right),
\end{equation}
where $\mu_p(z),K_p(z,z')$ are respectively, the posterior mean and covariance functions of the GP.
Given this posterior, we can probabilistically predict the decision maker's choice for any new finite choice set $B$, that is we can compute $P(C(B)=B'|\mathcal{D})$ for each $B' \subseteq B$.

When $d=1$ (scalar utility), the problem simplifies to the problem of learning a complete preference (aka standard preference learning).

Note that, for standard preference learning, the posterior is a SkewGP \cite{benavoli2020skew,benavoli2021preferential,benavoli2021}. It means that the posterior is asymmetric (skewed). However, for the analytical derivation of the results in the paper, we will approximate it with a GP (which has Gaussian marginals). We can use three methods to compute this approximation: (i) Laplace's approximation  \cite{mackay1996bayesian,williams1998bayesian}; (ii) Expectation Propagation \cite{minka2001family}; (iii) Kullback-Leibler divergence minimization \cite{opper2009variational}, including Variational approximation \cite{gibbs2000variational} as a particular case. 

\begin{example}
To introduce the problem of choice-function learning, we use a simple example: learning $\bH$'s preferences over risotto recipes. For clarity, we consider a case where all ingredients are fixed except for the amount of butter with values in $[1,9]$. We assume that in her mind, $\bH$'s taste as a function of the butter amount $x$ is as depicted in Figure \ref{fig:butter}, where higher values indicate a stronger preference. We further assume that there is only one state of the world, so that $c = 1$.  $\bR$ does not know $\bH$'s taste and learn it indirectly from $\bH$'s  preferences, such as
$$
\begin{aligned}
\mathcal{D}=\{&6.5 \succ 3.5, 7 \succ 5, 6.5 \succ 5.5, 3.5 \succ 8.5,\\
&1 \succ 9, 7 \succ 1.5, 4.5 \succ 7.5, 3.5 \succ 4\}
\end{aligned}
$$
where $6.5 \succ 3.5$ means that $\bH$ prefers a risotto with a butter amount of $6.5$ over one with a butter amount of $3.5$. These preferences have been generated according to the utility function depicted in Figure \ref{fig:butter}.

\begin{figure}[h]
\centering
\includegraphics[width=5cm]{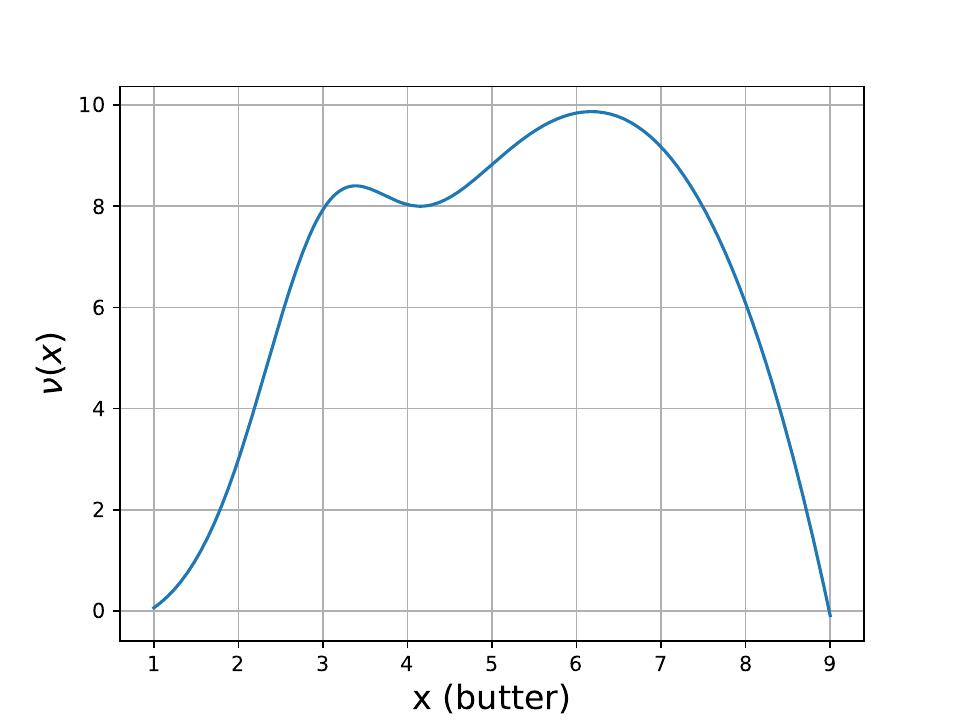}
\caption{$\bH$'s utility for risotto.}
\label{fig:butter}
\end{figure}

Since there is only a single latent utility dimension ($d=1$), this problem reduces to learning a complete preference order. $\bR$ will do it by placing a GP prior over the unknown utility $\nu(x)$, as depicted in Figure \ref{fig:butterprior}. The likelihood is simply:
\begin{equation}
\label{eq:like}
p(\mathcal{D}|\nu)=I_{\{\nu(6.5)>\nu(3.5)\}}I_{\{\nu(7)>\nu(5)\}}\cdots I_{\{\nu(3.5)>\nu(4)\}}
\end{equation}
where $I_{\{\nu(6.5) > \nu(3.5)\}}$ denotes the indicator function, which is equal to 1 when the condition in the subscript holds true, and 0 otherwise. 
The posterior computed based on the $8$ preferences above is shown in Figure \ref{fig:butterposterior}.
The posterior computed based on a total of $30$ preferences above is shown in Figure \ref{fig:butterposterior30}. It is important to note that $\bR$ can never fully estimate the utility hidden in $\bH$'s mind because this utility is not identifiable from preferences alone. $\bR$ can only learn it up to a non-decreasing transformation.
This explains why the posterior mean in Figure \ref{fig:butterposterior30} is not `exactly' equal to the original utility in Figure \ref{fig:butterprior}, but they both generate the same preferences.

\begin{figure}[h]
\centering
\includegraphics[width=8cm]{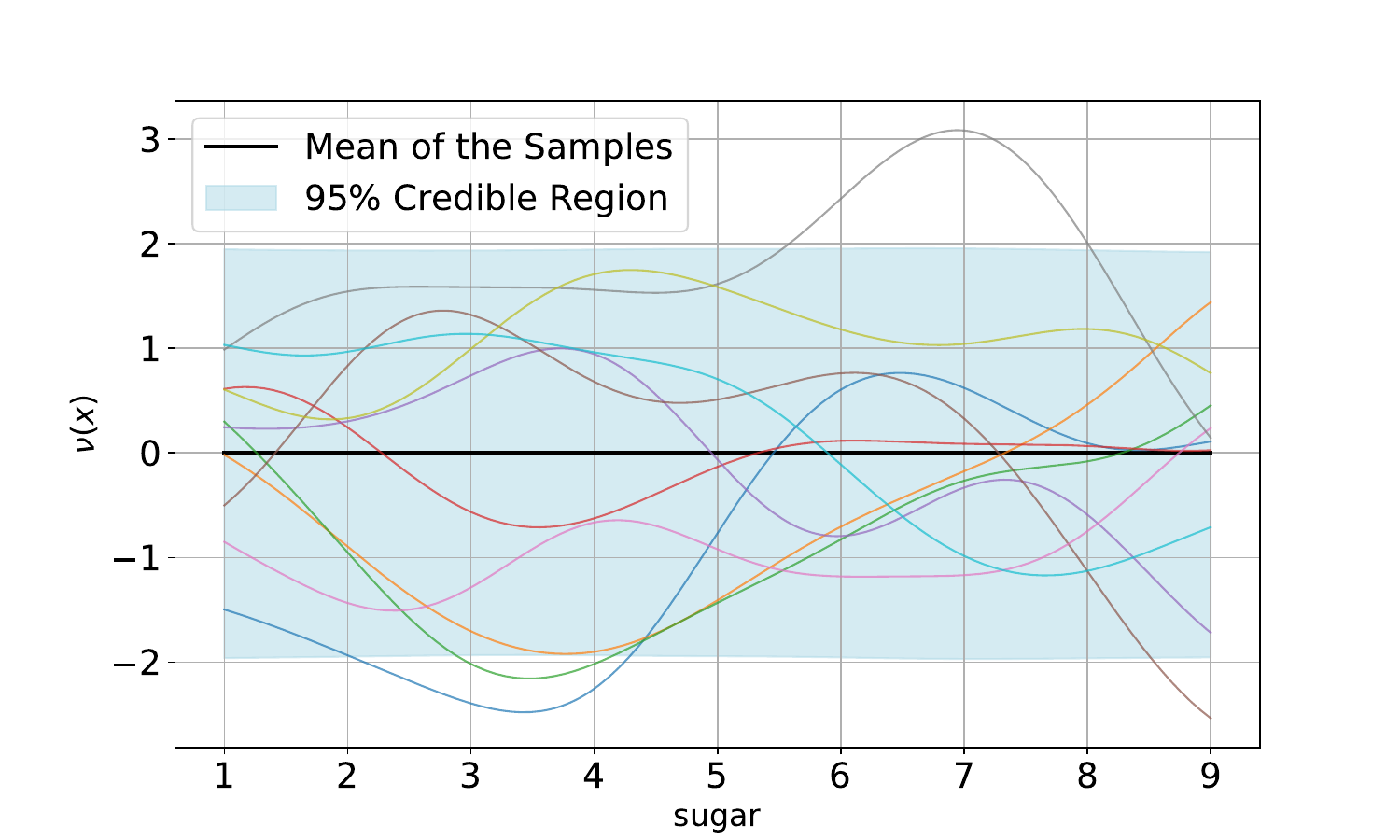}
\caption{GP prior: mean function (black line), 95\% credible region (blue shaded area), and 10 samples of $\nu(x)$, each shown in a different colour.
}
\label{fig:butterprior}
\end{figure}

\begin{figure}[h]
\centering
\includegraphics[width=7cm]{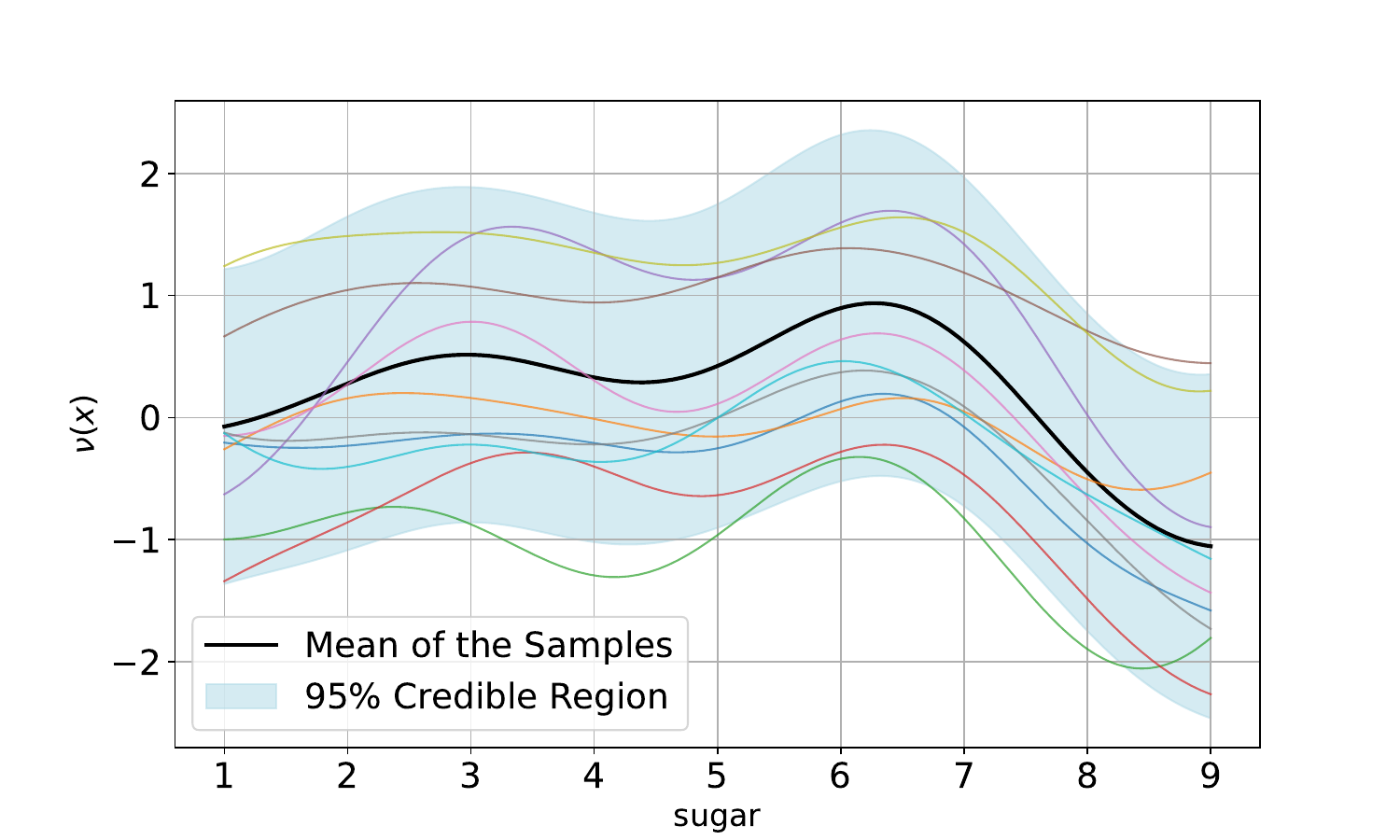}
\caption{GP posterior: mean function (black line), 95\% credible region (blue shaded area), and 10 samples of $\nu(x)$, each shown in a different colour.
}
\label{fig:butterposterior}
\end{figure}

\begin{figure}[h]
\centering
\includegraphics[width=7cm]{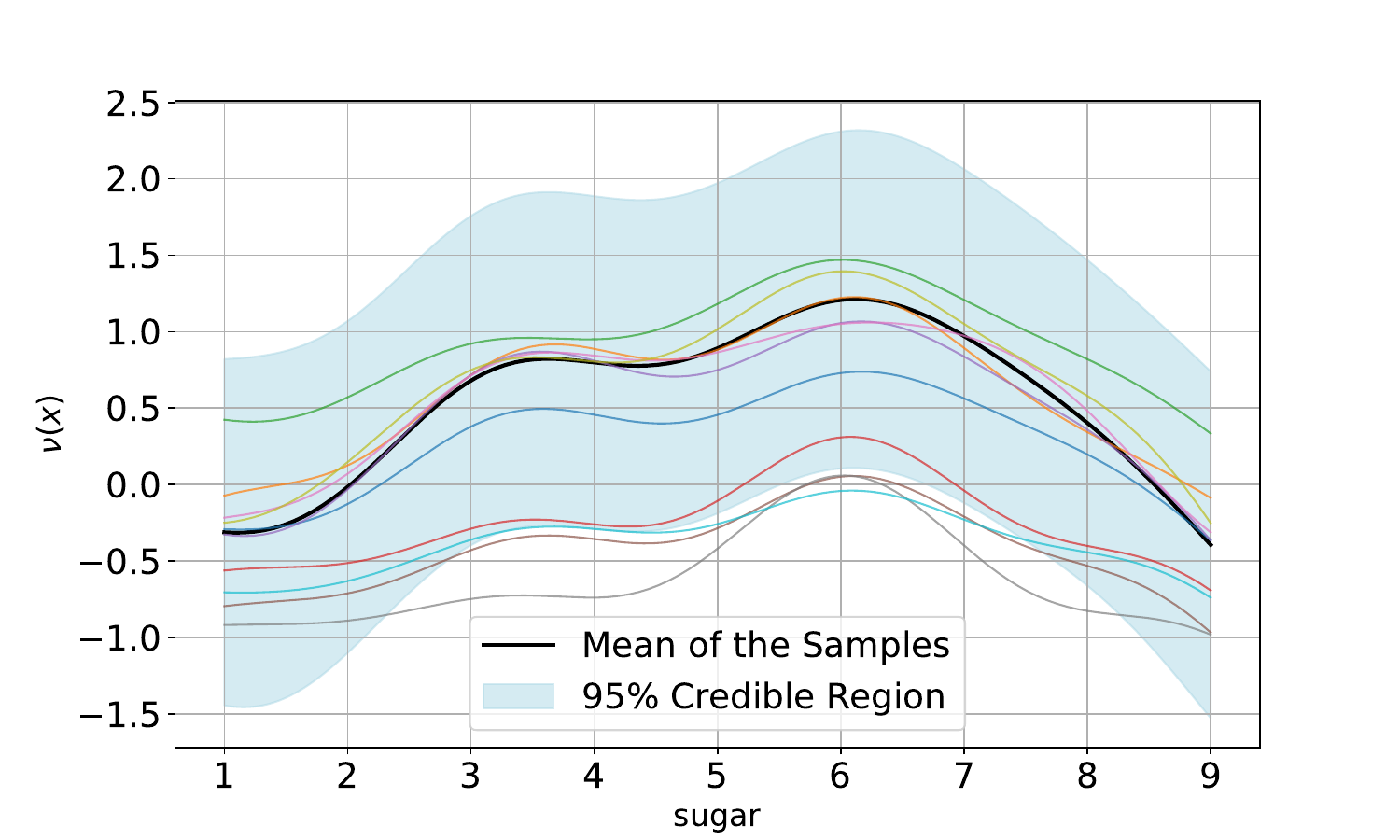}
\caption{GP posterior: mean function (black line), 95\% credible region (blue shaded area), and 10 samples of $\nu(x)$, each shown in a different colour.
}
\label{fig:butterposterior30}
\end{figure}
\end{example}

\subsection{Learning from choices under bounded-rationality}

In standard decision theory, it is assumed that the decision maker is rational. However, due to cognitive limitations, we can generally only assume that the decision maker is bounded-rational \cite{simon1990bounded}. A typical instance arises when the decision maker has limited time to make a decision (such as when playing chess), and, under time pressure, may resort to a random choice. 
Other cases involve limitations in computational resources \cite{gershman2015computational,benavoli2019sum} and discernibility \cite{luce1956semiorders}. In the latter, the decision-maker may struggle to differentiate between two acts with similar utilities, resulting in a random choice. These situations are usually modelled through \textit{random utility models} \cite{train2009discrete}, where noise with a certain distribution is included in the choice mechanism. For instance, in this case, the decision maker is assumed to choose $C(\{y,z\})=\{z\}$ when
\begin{equation}
\label{eq:unoise}
\nu(z)+n(z)>\nu(y)+n(y)
\end{equation}
where $n(z),n(y)$ are independent noises. If $n(z),n(y) \sim N(0,\sigma^2)$ \cite{Thu27}, then
\begin{equation}
\label{eq:probit}
\begin{aligned}
P(C(\{z,y\})=\{z\})&=\Phi\left(\frac{\nu(z)-\nu(y)}{\sigma}\right),\\
P(C(\{z,y\})=\{y\})&=\Phi\left(\frac{\nu(y)-\nu(z)}{\sigma}\right),\\
\end{aligned}
\end{equation}
where $\Phi$ is the CDF of the standard normal distribution. In the paper, we use $\phi$ to denote the PDF of the standard normal distribution. It can be noticed that when $\nu(z) - \nu(y) = 0$, the decision maker chooses between the two options with probability $1/2$. However, when the difference $|\nu(z) - \nu(y)|$ is very large compared to $\sigma$, the decision maker  will choose deterministically. Therefore, this random utility model captures a limit of discernibility through the discernibility parameter $\sigma$ \cite{benavoli2024tutorial}. Note that, we can use the GP model to learn from this noisy data, in this case the likelihood \eqref{eq:like} becomes
\begin{equation}
\label{eq:likeprobit}
p(\mathcal{D}|\nu)=\Phi\left(\tfrac{\nu(6.5)-\nu(3.5)}{\sigma}\right)\cdots \Phi\left(\tfrac{\nu(3.5)-\nu(4)}{\sigma}\right).
\end{equation}

These models are used for problem where the decision-maker needs to make a single item choice. However, if we allows incompleteness then we can define choice functions modelling bounded rational agents.
For instance, the following choice function models a limit of discernibility through incompleteness \cite{luce1956semiorders}:
\begin{equation}
\label{eq:scalarubounded}
\begin{aligned}
 C(\{z,y\})=\left\{\begin{array}{ll}
 \{z\} & \text{if } \nu(z) > \nu(y) + \sigma,\\
  \{y\} & \text{if } \nu(y) > \nu(z) + \sigma,\\
    \{z,y\} & \text{if } |\nu(z)-\nu(y)| \leq  \sigma,
 \end{array}\right.
 \end{aligned}
\end{equation}
where $\sigma>0$ is the limit of discernibility. In this section, we provided examples of bounded rationality for the scalar optimisation choice mechanism. Similar models can also be introduced for the vector mechanism \cite{benavoli2024tutorial}.

\section{Modified signalling games}
\label{sec:signalling}
\emph{Signalling games} \cite{spence1974market,gibbons1992game} specifically refer to a class of two-player games with incomplete information, where one player possesses private information (the Sender), while the other does not (the Receiver). The timing and payoffs of the game are  as
follows:
\begin{enumerate}
    \item Nature draws a state of the world (a type) $t_i \in T$  according to a probability distribution $p(t)$. 
        \item The Sender observes $t_i$ and then chooses a message $m_j \in M$ from a finite set of messages $M$ and sends it to the Receiver.
    \item The Receiver observes $m_j$ (but not $t_i$) and then chooses an action $a_k \in A$ from a finite set of possible actions $A$.
    \item If $a_k \in A' \subset A$  then the Sender chooses an action $b_l \in B$ from a finite set of possible actions $B$. Otherwise, $b_l= \varnothing$ (null decision).
    \item Payoffs for Sender and Receiver are given by $u_S(t_i, m_j, a_k,b_l)$
    and, respectively,  $u_R(t_i, m_j, a_k,b_l)$.
\end{enumerate}
Step 4 in the game is typically absent in standard signalling games, which is why we refer to it as \textit{modified}. %

\begin{crequirement}{1}
\label{req:1}
After observing any message $m_j$ from $M$, the
Receiver must have a belief about which types could have sent $m$. Denote
this belief by the probability $p(t|m_j)$.
\end{crequirement}
\begin{crequirement}{2R}
\label{req:2R}
For each $m_j \in M$, the Receiver's action
$a^*(m_j)$ must maximise the Receiver's expected utility, given the belief $p(t|m_j)$ about which types could have sent $m_j$. That is,
\begin{equation}
a^*(m_j)= \arg\max_{a_k \in A} \int_{T} u_R(t,m_j,a_k,b^*(a_k))dp(t|m_j).
\end{equation}
\end{crequirement}
Requirement \ref{req:2R} also applies to the Sender, but the Sender has complete information (and hence a trivial belief), so Requirement  \ref{req:2S}  is simply that the
Sender's strategy be optimal given the Receiver's strategy:
\begin{crequirement}{2S}
\label{req:2S}
For each $t_i \in T$, the Sender's message
$m^*(t_i)$ and the action $b^*(a^*(m_j))$ must maximize the Sender's utility, given the Receiver's strategy $a^*(m_j)$. That is, we have that:
\begin{equation}
\begin{aligned}
&\big(  m^*(t_i),b^*(a^*(m_j)) \big)\\
&= \arg\max_{m \in M, b \in B} u_s(t_i,m,a^*(m),b(a^*(m))).
\end{aligned}
\end{equation}
\end{crequirement}
Finally, given the Sender's strategy $m^*(t_i)$, let $T_j$ denote the set
of types that send the message $m_j$. That is, $t_i$ is a member of
the set $T_j$ if $m^*(t_i) = m_j$. If $T_j$ is nonempty then the information
set corresponding to the message $m_j$ is on the equilibrium path; otherwise, $m_j$ is not sent by any type and so the corresponding
information set is off the equilibrium path. For messages on the
equilibrium path, we have that
\begin{crequirement}{3}
\label{req:3}
For each $m_j \in M$, if there exists $t \in T$ such that  $m^*(t) = m_j$, then the Receiver's belief at the information set corresponding to $m_j$ must follow from Bayes' rule and the Sender's strategy:
\begin{equation}
    p(t|m_j)%
    =\frac{I_{\{m^*(t) = m_j\}}(t)p(t)}{\int_{T}I_{\{m^*(t) = m_j\}}(t)dp(t)}.
\end{equation}
\end{crequirement}

\begin{definition}
A pure-strategy perfect Bayesian equilibrium in a signalling game is a triplet of strategies $m^*(t_i),a^*(m_j),b(a^*(m_j))$  and a belief $p(t|m_j)$
satisfying Requirements \ref{req:1}--\ref{req:3}.
\end{definition}

\section{The  off-switch game as a signalling game}
In this section, we revisit the off-switch problem \cite{hadfield2017off} through the lens of signalling games. In the off-switch game, the human ($\bH$), acts as the Sender and the robot ($\bR$), as the Receiver, with $T$ representing the set of possible types for $\bH$. Each type $t_i \in T$ characterises $\bH$'s ``taste'' and degree of bounded rationality for an underlying decision problem $(\mathcal{S}, \mathcal{O}, \mathcal{X}, C)$. As studied in \cite{hadfield2017off}, we consider a bounded-rationality model similar to \eqref{eq:unoise}, where a single scalar utility is assumed. Thus, the type $t_i$ is a realisation of a utility function $\nu$ and (a vector of) noises\footnote{It is a vector because the noise is also present in the message, that is in the choice dataset.} ${ \bf n}$, where $p({ \bf n}) = N({ \bf n}; 0, I\sigma^2)$ and $p(\nu) = GP(\nu; \mu_0, K_0)$, for some mean function $\mu_0$, kernel function $K_0$, and parameter $\sigma^2$ ($I$ is the identity matrix). We further assume that all these parameters are common knowledge in the game.

The message $m \in M$ encapsulates $\bH$'s preferences/choices over acts in $\mathcal{X}$, relative to the underlying decision problem $(\mathcal{S}, \mathcal{O}, \mathcal{X}, C)$. That is, $m$ represents a dataset of choices as in \eqref{eq:choiciedata}. Since we are considering a single utility function, we assume a choice mechanism as in \eqref{eq:scalaru}.
We first assume that the payoffs do not directly depend on the message $m$. In signalling games, this assumption is referred to as \textit{cheap-talk} \cite{gibbons1992game}. Furthermore, we assume that each type $t$ can only choose one message, i.e., $\bH$ can only send the message determined by Nature through $\bH$'s type. This is the setting studied in \cite{hadfield2017off}. The message is a choice dataset, such as the one in \eqref{eq:choiciedata}, which $\bR$ uses to estimate $\bH$'s utility using the approach described in Section \ref{sec:learning}.

We consider two acts $x,o \in \mathcal{X}$ relative to the underling decision problem $(\mathcal{S}, \mathcal{O}, \mathcal{X}, C)$. We assume that $o$ is the status-quo and $x$ is some new act. For instance, in our risotto example, $o$ could be the best recipe known by $\bH$ and $x$ a new recipe proposed by $\bR$.\footnote{Although we do not address the optimal selection of $x$ in this paper, we assume that $\bR$ will choose an action $x$ that is preferable to $o$ whenever possible.}  In this context, the available actions for $\bR$ are $A = \{IMM, DEF, DoN\}$. 
$IMM$ means `immediate decision', that is $\bR$ will implement $x$. DoN means `do nothing'. $DEF$ means $\bR$ will defer to $\bH$. In the signalling game, this implies that $A' = \{DEF\}$. In the case $\bR$ defers to $\bH$, then $\bH$ has two possible actions, $B = \{OFF, \neg OFF\}$, either to switch off $\bR$ or to allow $\bR$ to implement $x$. Therefore, the payoff for $\bR$ is
\begin{equation}
\label{eq:payoffprop1R}
\begin{aligned}
u_R(t,IMM,\varnothing)&=\nu(x),\\
u_R(t,DoN,\varnothing)&=\nu(o),\\
u_R(t,DEF,b^*(DEF))&=\nu(o)I_{\{\nu(o)+n(o)>\nu(x)+n(x)\}}
\\&+\nu(x)I_{\{\nu(x)+n(x)>\nu(o)+n(o)\}}.
\end{aligned}
\end{equation}
Indeed, if $\bR$ chooses $IMM$, then the payoff for  $\bR$ is equal to the utility that $\bH$ receives from the act $x$. If $\bR$ chooses $DoN$, then the payoff  for $\bR$ is equal to the utility of the status-quo for $\bH$  (utility relative to the act $o$). 
The last case represents the payoff for the action $DEF$, where the action depends on $\bH$'s move in the game, which is encapsulated by the indicator functions. Given $\bH$ is a bounded rational agent, $\bH$ chooses the action $OFF$ or $\neg OFF$ based on $\bH$'s noisy utility. If $\bH$ chooses $OFF$, then $\bH$  receives the utility of the status-quo $o$, otherwise the utility of $x$.

According to \textbf{Requirement 2R}, $\bR$ will choose the action that maximises the expected value of the payoff in \eqref{eq:payoffprop1R}, where the expectation is computed with respect to $p(t | m_j)$, i.e., the posterior distribution over $\bH$'s type (utility $\nu$ for the underlying decision problem $(\mathcal{S}, \mathcal{O}, \mathcal{X}, C)$) learned by $\bR$ from the message $m_j$ (which is a dataset of preferences). We can then prove the following lemma.

\begin{lemma}
\label{lem:1}
Assume that $p(\nu|\mathcal{D})=GP\left(\nu;\mu_p
,K_p\right)$ is the GP posterior computed by $\bR$ from the prior   $p(\nu)=GP\left(\nu;\mu_0,K_0\right)$, the bounded-rationality likelihood \eqref{eq:probit} and the message $m_j=\mathcal{D}$, then
 the expected payoffs of $\bR$'s actions  are:
 {\small
\begin{align}
\nonumber
&DEF:  \int_{T} \Big(\nu(o)I_{\{\nu(o)+n(o)>\nu(x)+n(x)\}}+\nu(x)I_{\{\nu(o)+n(o)<\nu(x)+n(x)\}} \Big)\\
\nonumber
&dp(\nu(x),n(x),\nu(o),n(o)|m_j)\\
\nonumber
&=\mu_p(x)\left(1-\Phi\left(\tfrac{\mu_p(o)-\mu_p(x)}{\sqrt{K_p(x,x)+2\sigma^2+K_p(o,o)-2K_p(x,o)}}\right)\right)\\
\nonumber
& + \tfrac{K_p(x,x)-K_p(x,o)}{\sqrt {K_p(x,x)+2\sigma^2+K_p(o,o)-2K_p(x,o)}}\phi\left(\tfrac{\mu_p(o)-\mu_p(x)}{\sqrt{K_p(x,x)+2\sigma^2+K_p(o,o)-2K_p(x,o)}}\right)\\
\nonumber
&+\mu_p(o)\left(1-\Phi\left(\tfrac{\mu_p(x)-\mu_p(o)}{\sqrt{K_p(x,x)+2\sigma^2+K_p(o,o)-2K_p(x,o)}}\right)\right)\\
\label{eq:DEFexp}
& + \tfrac{K_p(o,o)-K_p(x,o)}{\sqrt {K_p(x,x)+2\sigma^2+K_p(o,o)-2K_p(x,o)}}\phi\left(\tfrac{\mu_p(x)-\mu_p(o)}{\sqrt{K_p(x,x)+2\sigma^2+K_p(o,o)-2K_p(x,o)}}\right),\\
\label{eq:IMMexp}
&IMM:  \int_{T}\nu(x)dp(\nu(x),n(x),\nu(o),n(o)|m_j)=\mu_p(x),\\
\label{eq:OFFexp}
&DoN:  \int_{T}\nu(o)dp(\nu(x),n(x),\nu(o),n(o)|m_j)=\mu_p(o),
\end{align}}
where  {\small $$p(\nu(x),n(x),\nu(o),n(o)|m_j)
=p(\nu(x),\nu(o)|m_j)
p(n(x),n(o))$$}
with {\small $p(n(x),n(o))=N(n(x);0,\sigma^2)N(n(o);0,\sigma^2)$} and
{\small
\begin{align}
\nonumber
&p(\nu(x),\nu(o)|m_j)\\
\label{eq:multiprior3}
&= N\left(\begin{bmatrix}
\nu(x) \\ 
\nu(o) 
\end{bmatrix};\begin{bmatrix}
\mu_p(x) \\ 
\mu_p(o) 
\end{bmatrix},\begin{bmatrix}
K_p(x,x) & \hspace{-2mm} K_p(x,o) \\ 
K_p(o,x) & \hspace{-2mm}  K_p(o,o) 
\end{bmatrix}\right).
\end{align}}
\end{lemma}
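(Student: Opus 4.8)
The plan is to compute the three expected payoffs in \eqref{eq:DEFexp}--\eqref{eq:OFFexp} directly from the joint posterior \eqref{eq:multiprior3} and the independent noise $p(n(x),n(o))$. The cases $IMM$ and $DoN$ are immediate: $\int \nu(x)\,dp = \mu_p(x)$ and $\int \nu(o)\,dp = \mu_p(o)$ are just the marginal means of the bivariate Gaussian in \eqref{eq:multiprior3}, and the noise variables integrate out trivially since the integrand does not depend on them. So the real content is the $DEF$ term.

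For $DEF$, first I would introduce the auxiliary random variable $W := (\nu(o)+n(o)) - (\nu(x)+n(x))$. Under the joint law $p(\nu(x),\nu(o)|m_j)\,p(n(x),n(o))$, the vector $(\nu(x),\nu(o),W)$ is jointly Gaussian, with $W$ having mean $\mu_p(o)-\mu_p(x)$ and variance $s^2 := K_p(x,x) + K_p(o,o) - 2K_p(x,o) + 2\sigma^2$ (the $2\sigma^2$ coming from the two independent noises). The $DEF$ payoff is then $\mathbb{E}[\nu(o)\,I_{\{W>0\}}] + \mathbb{E}[\nu(x)\,I_{\{W<0\}}]$. Each of these is a standard ``expectation of one jointly Gaussian variable restricted to a half-line determined by another'' computation: for jointly Gaussian $(Z,W)$ one has $\mathbb{E}[Z\,I_{\{W>0\}}] = \mu_Z\,\bigl(1-\Phi(-\mu_W/s)\bigr) + \frac{\mathrm{Cov}(Z,W)}{s}\,\phi(\mu_W/s)$, using symmetry of $\phi$. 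Applying this with $Z=\nu(o)$ and $\mathrm{Cov}(\nu(o),W)=K_p(o,o)-K_p(o,x)$ gives the third and fourth lines of \eqref{eq:DEFexp}; applying it with $Z=\nu(x)$, the event $\{W<0\}$ (equivalently flipping the sign of $W$), and $\mathrm{Cov}(\nu(x),-W)=K_p(x,x)-K_p(x,o)$ gives the first and second lines. Summing yields \eqref{eq:DEFexp}.

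The main obstacle — really the only nontrivial step — is getting the covariance terms and the signs right in the half-line expectation identity, in particular keeping track of which argument of $\Phi$ and $\phi$ appears (note $\Phi(\mu_W/s)$ versus $1-\Phi(-\mu_W/s)$, which coincide, and the $\phi$ arguments differ only in sign so $\phi$ is unchanged). I would derive the identity $\mathbb{E}[Z\,I_{\{W>0\}}]$ once cleanly by writing $Z = \mu_Z + \frac{\mathrm{Cov}(Z,W)}{s^2}(W-\mu_W) + Z_\perp$ with $Z_\perp$ independent of $W$, so that $\mathbb{E}[Z\,I_{\{W>0\}}] = \mu_Z\,P(W>0) + \frac{\mathrm{Cov}(Z,W)}{s^2}\,\mathbb{E}[(W-\mu_W)I_{\{W>0\}}]$, and then use $P(W>0)=1-\Phi(-\mu_W/s)$ and the truncated-mean identity $\mathbb{E}[(W-\mu_W)I_{\{W>0\}}] = s\,\phi(\mu_W/s)$. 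Everything else is substitution and bookkeeping, and the GP posterior form \eqref{eq:multiprior3post} together with Section \ref{sec:learning} is exactly what licenses treating $(\nu(x),\nu(o))$ as the Gaussian in \eqref{eq:multiprior3} in the first place.
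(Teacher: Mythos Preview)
Your proposal is correct and actually cleaner than the paper's route. The paper proves Lemma~\ref{lem:1} by invoking an auxiliary lemma (Lemma~\ref{lem:preference_learn}) which computes $E[\nu(x)I_{\{\nu(x)+n(x)>\nu(o)+n(o)\}}]$ by conditioning on $\nu(o)$ and $n(o)-n(x)$, applying the inverse Mills ratio \eqref{eq:mills} conditionally, and then integrating out step by step using the Gaussian integral identities \eqref{eq:ResGPDFint}--\eqref{eq:xResprobitint}; this produces a page of intermediate substitutions before an algebraic simplification (the variance identity \eqref{eq:var00}) collapses everything to the stated form. Your approach bypasses all of that by working directly with the jointly Gaussian pair $(Z,W)$ and the regression decomposition $Z=\mu_Z+\tfrac{\mathrm{Cov}(Z,W)}{s^2}(W-\mu_W)+Z_\perp$, which reduces the computation to the single scalar identity $E[(W-\mu_W)I_{\{W>0\}}]=s\,\phi(\mu_W/s)$. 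What you gain is brevity and conceptual transparency (the covariance terms $K_p(x,x)-K_p(x,o)$ and $K_p(o,o)-K_p(x,o)$ appear immediately as $\mathrm{Cov}(\nu(x),-W)$ and $\mathrm{Cov}(\nu(o),W)$, with no need for the simplification step \eqref{eq:var00}); what the paper's approach buys is that each step is an explicit one-dimensional integral matched against a tabulated formula, which may be easier to mechanically verify line by line.
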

In Lemma \ref{lem:1}, we have exploited the fact that the marginal of a GP is a multivariate normal (equation \eqref{eq:multiprior3}).
We then introduce the following definitions:
\begin{definition}
\label{def:cases}
We say that:
\begin{itemize}
\item $S$ is \textbf{rational} whenever $\sigma\rightarrow 0$;  $S$ is \textbf{bounded-rational} otherwise;
\item $R$ has \textbf{no uncertainty} on $S$'s utility whenever $K_o(x,x),K_o(o,o),K_o(o,x)\rightarrow 0$ (that is the prior becomes a Dirac's delta); otherwise $R$ has \textbf{uncertainty}.
\end{itemize}
\end{definition}
In the signalling game defined in Section \ref{sec:signalling}, note that $R$ having no uncertainty about $S$'s utility implies that $R$ possesses perfect knowledge of $S$'s utility. This is because the prior $p(t)$ is  common knowledge within the game.

\begin{proposition}
\label{prop:1}
The optimal decisions for $\bR$ are:
\begin{itemize}
\item If $S$ is \textbf{rational} and $R$ has \textbf{no uncertainty}, then $DEF$ is  never dominated by $IMM,DoN$.
\item If $S$ is \textbf{bounded-rational} and $R$ has \textbf{no uncertainty}, then DEF is never optimal;
\item If $S$ is \textbf{rational} and $R$ has \textbf{uncertainty}, then DEF is always optimal.
\item If $S$ is \textbf{bounded-rational} and $R$ has \textbf{uncertainty}, then DEF is optimal if 
\eqref{eq:DEFexp} is larger or equal than the maximum between \eqref{eq:IMMexp} and \eqref{eq:OFFexp}.
\end{itemize}
\end{proposition}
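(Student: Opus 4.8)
The plan is to collapse Lemma~\ref{lem:1} to a single scalar comparison and then obtain the four bullets as limiting regimes of Definition~\ref{def:cases}. Put $\delta:=\mu_p(x)-\mu_p(o)$, $\kappa:=K_p(x,x)+K_p(o,o)-2K_p(x,o)$ (this is the posterior variance of $\nu(x)-\nu(o)$, hence $\kappa\ge 0$) and $s:=\sqrt{\kappa+2\sigma^{2}}$. Since $\phi$ is even and $1-\Phi(t)=\Phi(-t)$, the four lines of \eqref{eq:DEFexp} combine into
\[
\mathrm{DEF}=\mu_p(x)\,\Phi\!\left(\tfrac{\delta}{s}\right)+\mu_p(o)\,\Phi\!\left(-\tfrac{\delta}{s}\right)+\tfrac{\kappa}{s}\,\phi\!\left(\tfrac{\delta}{s}\right),
\]
while $\mathrm{IMM}=\mu_p(x)$ and $\mathrm{DoN}=\mu_p(o)$ by \eqref{eq:IMMexp}--\eqref{eq:OFFexp}. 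The right-hand side is symmetric under $x\leftrightarrow o$, so assume without loss $\delta\ge 0$; then $\max(\mathrm{IMM},\mathrm{DoN})=\mu_p(x)$ and one line of algebra gives $g:=\mathrm{DEF}-\max(\mathrm{IMM},\mathrm{DoN})=-\delta\,\Phi(-\delta/s)+(\kappa/s)\,\phi(\delta/s)$. Everything now comes down to the sign of $g$.

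For the two ``no uncertainty'' cases, $K_0\to 0$ forces $K_p\to 0$ — the posterior covariance is dominated by the prior covariance for each of the Gaussian approximations of Section~\ref{sec:learning} (the probit likelihood is log-concave), and then $|K_p(x,o)|\le\sqrt{K_p(x,x)K_p(o,o)}\to 0$ — so $\kappa\to 0$, $s\to\sqrt2\,\sigma$ and $g\to-\delta\,\Phi(-\delta/s)$. If moreover $\sigma\to 0$ (\emph{rational}), then $s\to 0$ and $g\to 0$: $\mathrm{DEF}$ ties the larger of $\mathrm{IMM},\mathrm{DoN}$, so it is never dominated — the first bullet. If instead $\sigma>0$ (\emph{bounded-rational}), then $\Phi(-\delta/(\sqrt2\,\sigma))\in(0,\tfrac12]$, so $g<0$ whenever $\delta\neq 0$ (equivalently, $\mathrm{DEF}$ is a strict convex combination of $\mu_p(x)$ and $\mu_p(o)$); hence $\mathrm{DEF}$ is dominated by $\max(\mathrm{IMM},\mathrm{DoN})$ and never optimal — the second bullet. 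The tie configuration $\delta=0$, where the three actions coincide, should be noted and read as ``not a strict optimum''.

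For the third bullet ($\sigma\to 0$ with uncertainty), $s\to\sqrt\kappa$ and $\mathrm{DEF}$ becomes $\mathbb{E}[\max(\nu(x),\nu(o))]$ for the bivariate normal \eqref{eq:multiprior3}; since $\max$ is convex and, under uncertainty, $\nu(x)-\nu(o)$ is non-degenerate ($\kappa>0$), strict Jensen gives $\mathrm{DEF}>\max(\mu_p(x),\mu_p(o))$, so $\mathrm{DEF}$ strictly dominates and is always optimal. The explicit version I would actually write is $g=\sqrt\kappa\,h(\delta/\sqrt\kappa)$ with $h(u):=\phi(u)-u\,\Phi(-u)$: one checks $h(0)=\phi(0)>0$, $h'(u)=-\Phi(-u)<0$ and $h(u)\to 0$ as $u\to\infty$ (Mills' ratio), so $h>0$ on $[0,\infty)$ and $g>0$. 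The fourth bullet is immediate from Requirement~\ref{req:2R}: $\bR$ picks an action of maximal expected payoff, and those payoffs are precisely \eqref{eq:DEFexp}, \eqref{eq:IMMexp}, \eqref{eq:OFFexp} by Lemma~\ref{lem:1}, so $\mathrm{DEF}$ is optimal exactly when \eqref{eq:DEFexp} is at least the larger of \eqref{eq:IMMexp} and \eqref{eq:OFFexp}.

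The main obstacle is not any individual case but the rigour of the limits: justifying $K_p\to 0$ from $K_0\to 0$ strongly enough to pass to the limit inside \eqref{eq:DEFexp} and to deal with the degenerate Gaussians appearing there, and carefully separating strict from weak inequalities — in particular the non-degeneracy $\kappa>0$ that ``always optimal'' in the third case genuinely requires, and the reading of ``optimal''/``dominated'' on the measure-zero locus $\mu_p(x)=\mu_p(o)$. The inequality $\phi(u)>u\,\Phi(-u)$ underlying the third case is the only substantial estimate, and the monotonicity argument above settles it.
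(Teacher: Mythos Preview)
Your proof is correct and follows the same strategy as the paper's: compute the expected payoffs from Lemma~\ref{lem:1}, pass to the limiting regimes of Definition~\ref{def:cases}, and for the rational-with-uncertainty case invoke the convexity of $\max$ (Jensen). Your reduction to the single scalar $g=-\delta\,\Phi(-\delta/s)+(\kappa/s)\,\phi(\delta/s)$ and the auxiliary monotonicity argument for $h(u)=\phi(u)-u\,\Phi(-u)$ are tidier than the paper's case-by-case computation, and your explicit remarks on the limit $K_0\to0\Rightarrow K_p\to0$ and on the degenerate locus $\mu_p(x)=\mu_p(o)$ are more careful than the paper, which simply assumes the former and stipulates that the $\sigma$-limit is taken first.
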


Therefore, by interpreting the non-optimality of $DEF$ as the robot disabling the off-switch,\footnote{This is based on the premise that the robot (as a recommendation system) will propose an act $x$ that improves $\bH$'s utility whenever possible (compared to $o$). In this scenario, deferring to a human is equivalent to not disabling the off-switch since, otherwise, $\bR$ will implement $x$.} we conclude that:

\begin{tcolorbox}[width=\linewidth, sharp corners=all, colback=white!95!black]
If $\bH$ is rational, then $\bR$  will never disable the off-switch. If $\bH$ is bounded-rational, a necessary condition for $\bR$ not to disable the off switch is the presence of uncertainty.
\end{tcolorbox}

This result aligns with what was proven in \cite{hadfield2017off}; however, in our case, the statements have been rigorously established within the framework of signalling games, also incorporating the posterior uncertainty (in equation \eqref{eq:multiprior3}) derived from preference learning. This means that the result was proven under  the principles  P1--P3.

It is particularly interesting to note that the conclusions of Proposition \ref{prop:1} would still hold, even if the absence of uncertainty  did not imply perfect knowledge of $\nu$ for the robot.\footnote{The result of the Proposition depends only on whether the posterior is deterministic or not.}
For this reason, the \textit{moral} of this result is that we should not build $\bR$ to estimate $\nu$ through a model, such as a \textit{neural network}, that does not provide any measure of uncertainty \cite{hadfield2017off}.  Indeed, Proposition \ref{prop:1} strongly supports the use of probabilistic methods in AI. We will revisit this in Section \ref{sec:numeric1}.

\subsection{The cost of messaging}
In the previous section, we have considered a situation where messaging is `cheap', that is it does not affect  $\bH$'s utility. If this is not the case, we may assume that the cost of communications is proportional to the length of the message $m$, that is, the communication cost is $\gamma' \ell_m$ for some scaling parameter $\gamma' >0$.
However, since the utility $\nu$ is only defined by pairwise comparisons and lacks an absolute scale, we cannot directly assign a value to $\gamma'$. Instead, the communication cost must be defined relative to $\nu$, for example, as $\gamma' = \gamma |\nu(o)|$ for some $\gamma >0$. 
Using our risotto example, this implies that the communication cost is expressed on a ``taste'' scale.
In this case, the payoff for $\bR$ is:
\begin{equation}
\label{eq:payoffprop1Rcomm}
\begin{aligned}
u_R(t,DEF,b^*(DEF))&=\nu(o)I_{\{\nu(o)+n(o)>\nu(x)+n(x)\}}
\\&+\nu(x)I_{\{\nu(x)+n(x)>\nu(o)+n(o)\}}\\
&-\gamma(\ell_{m_j}+1),\\
u_R(t,IMM,\varnothing)&=\nu(x)-\gamma \ell_{m_j},\\
u_R(t,OFF,\varnothing)&=\nu(o)-\gamma \ell_{m_j}.
\end{aligned}
\end{equation}
Here, $\gamma \ell_{m_j}$ represents the communication cost associated with sending the message $m_j$, and the additional $\gamma$ in the first term arises because deferring a decision to $\bH$ involves further communication.

\begin{corollary}
\label{co:2}
The optimal decisions for $\bR$ are:
\begin{itemize}
\item If  $R$ has \textbf{no uncertainty}, then DEF is never optimal.
\item If $S$ is \textbf{rational} and $R$ has \textbf{uncertainty}, then DEF is optimal if 
\begin{equation}
\label{eq:cond1bis}
p\mu_p(x) + (1-p)\mu_p(o)+e\geq \beta+\max\left(\mu_p(x) ,\mu_p(o)\right),
\end{equation}
where $p=\Phi\left(\tfrac{\mu_p(x)-\mu_p(o)}{\sqrt{K_p(o,o)+K_p(x,x)-2K_p(x,o)}}\right)$ and 
$$
 \resizebox{0.91\hsize}{!}{$
\begin{aligned}
e&= \tfrac{K_p(x,x)-K_p(x,o)}{\sqrt {K_p(x,x)+K_p(o,o)-2K_p(x,o)}}
\phi\left(\tfrac{\mu_p(o)-\mu_p(x)}{\sqrt{K_p(x,x)+K_p(o,o)-2K_p(x,o)}}\right)\\
& + \tfrac{K_p(o,o)-K_p(x,o)}{\sqrt {K_p(x,x)+K_p(o,o)-2K_p(x,o)}}\phi\left(\tfrac{\mu_p(x)-\mu_p(o)}{\sqrt{K_p(x,x)+K_p(o,o)-2K_p(x,o)}}\right).
\end{aligned}$}
$$
\item If $S$ is \textbf{bounded-rational} and $R$ has \textbf{uncertainty}, then DEF is optimal if 
\eqref{eq:DEFexp} is larger or equal than $\beta$ plus the maximum between \eqref{eq:IMMexp} and \eqref{eq:OFFexp}.
\end{itemize}
where
 \resizebox{0.91\hsize}{!}{$
\begin{aligned}
\beta &= \gamma \mu_p(o) \left(1-2\Phi\left(\tfrac{-\mu_p(o)}{\sqrt{K_p(o,o)}}\right)\right)+2\gamma\sqrt{K_p(o,o)} \phi\left(\tfrac{-\mu_p(o)}{\sqrt{K_p(o,o)}}\right).
\end{aligned}$}
\end{corollary}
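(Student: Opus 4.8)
\emph{Proof plan.} This corollary is the cost-augmented version of Proposition \ref{prop:1}, so the plan is to redo the expected-payoff computation with the communication cost included. Starting from \eqref{eq:payoffprop1Rcomm}, I would take the posterior expectation of each payoff. Since the posterior factorises as $p(\nu(x),\nu(o),n(x),n(o)\mid m_j)=p(\nu(x),\nu(o)\mid m_j)\,p(n(x),n(o))$ (Lemma \ref{lem:1}) and the cost $\gamma|\nu(o)|\ell_{m_j}$ depends on the type only through $\nu(o)$, its expectation is $\gamma\ell_{m_j}\,\mathbb{E}\big[|\nu(o)|\big]$ with $\nu(o)\sim N(\mu_p(o),K_p(o,o))$ by \eqref{eq:multiprior3}. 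As $\ell_{m_j}$ is fixed once the message has been sent, the term $\gamma\ell_{m_j}\,\mathbb{E}[|\nu(o)|]$ enters the expected payoff of $IMM$, $DoN$ and $DEF$ identically and cancels in every comparison; the only cost that survives is the extra unit of communication charged to $DEF$, with expectation $\gamma\,\mathbb{E}[|\nu(o)|]$.

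Next I would evaluate $\mathbb{E}[|\nu(o)|]$. For a Gaussian with mean $\mu$ and variance $s^2$, the mean of its absolute value (a folded normal) equals $\mu\big(1-2\Phi(-\mu/s)\big)+2s\,\phi(\mu/s)$; putting $\mu=\mu_p(o)$ and $s=\sqrt{K_p(o,o)}$ and multiplying by $\gamma$ yields exactly $\beta$. Together with Lemma \ref{lem:1}, which identifies the non-cost parts of the expected payoffs of $DEF$, $IMM$, $DoN$ with \eqref{eq:DEFexp}, \eqref{eq:IMMexp}, \eqref{eq:OFFexp}, Requirement \ref{req:2R} then gives that $\bR$ plays $DEF$ exactly when \eqref{eq:DEFexp} is at least $\beta$ plus the larger of \eqref{eq:IMMexp} and \eqref{eq:OFFexp}. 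This is the third bullet; the second bullet will be the specialisation $\sigma\to0$, and the first bullet the specialisation $K_p\to0$.

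For the first bullet I would take $K_p(x,x),K_p(o,o),K_p(x,o)\to0$ in \eqref{eq:DEFexp}. The two indicator-weight factors $1-\Phi(\cdot)$ converge to $q$ and $1-q$ with $q=\Phi\!\big((\mu_p(x)-\mu_p(o))/(\sigma\sqrt2)\big)\in[0,1]$, while the two $\phi$-terms vanish because their coefficients $(K_p(x,x)-K_p(x,o))/\sqrt{K_p(x,x)+2\sigma^2+K_p(o,o)-2K_p(x,o)}$ tend to $0$ and $\phi$ stays bounded; hence \eqref{eq:DEFexp} tends to the convex combination $q\,\mu_p(x)+(1-q)\,\mu_p(o)$, which is at most $\max(\mu_p(x),\mu_p(o))$. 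Meanwhile $\beta\to\gamma|\mu_p(o)|$ (computed separately for $\mu_p(o)>0$ and $\mu_p(o)<0$, giving the same value), which is strictly positive generically, so \eqref{eq:DEFexp} stays strictly below $\beta+\max(\mu_p(x),\mu_p(o))$: $DEF$ is never optimal. For the second bullet I set $\sigma\to0$ in \eqref{eq:DEFexp}: the $2\sigma^2$ term drops, each $1-\Phi(\cdot)$ becomes $\Phi(\cdot)$ so the linear part is $p\,\mu_p(x)+(1-p)\,\mu_p(o)$ with $p$ as stated, and — using $K_p(x,x)+K_p(o,o)-2K_p(x,o)=\mathrm{Var}(\nu(x)-\nu(o))$ and the evenness of $\phi$ — the two $\phi$-terms collapse into $e$. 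Thus \eqref{eq:DEFexp} becomes $p\,\mu_p(x)+(1-p)\,\mu_p(o)+e$, and $DEF$ is optimal iff this is $\ge\beta+\max(\mu_p(x),\mu_p(o))$, i.e.\ \eqref{eq:cond1bis}.

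The main obstacle is bookkeeping rather than conceptual: matching the folded-normal mean to the precise form of $\beta$ (the signs, and the $\phi$ versus $1-\Phi$ substitutions), handling the $K_p\to0$ limit of \eqref{eq:DEFexp} where the $\phi$-coefficients form a $0/0$ expression that scales like $\sqrt{K_p}$ and so vanishes while the accompanying $\phi$ remains bounded, and verifying that $\beta\to\gamma|\mu_p(o)|$ whatever the sign of $\mu_p(o)$. With those points pinned down the three bullets follow by direct substitution into Lemma \ref{lem:1} and Requirement \ref{req:2R}.
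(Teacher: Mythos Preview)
Your proposal is correct and follows essentially the same route as the paper: identify the extra cost of $DEF$ as $\beta=\gamma\,\mathbb{E}[|\nu(o)|]$, evaluate it via the folded-normal mean (the paper's \eqref{eq:absx}), subtract it from the $DEF$ payoff of Lemma~\ref{lem:1}, and then specialise to the no-uncertainty and $\sigma\to0$ cases exactly as in Proposition~\ref{prop:1}. Your treatment of the $K_p\to0$ limit is in fact more explicit than the paper's (which simply writes down the limiting payoffs separately for the $S$-rational and $S$-bounded-rational sub-cases), but the argument and conclusions coincide.
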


Summarising it we have that

\begin{tcolorbox}[width=\linewidth, sharp corners=all, colback=white!95!black]
With communication cost, if $\bR$ has no uncertainty about $\bH$'s utilities, then $\bR$ will always disable  the off switch (even when $\bH$ is rational). 
\end{tcolorbox}
The takeaway is that if $\bH$ genuinely values preserving the off switch, it should not   penalise  messaging to $\bR$.

\subsection{Another mechanism of bounded rationality}

In the previous two sections, we considered a utility model governed by a Gaussian-noise bounded rationality mechanism. In this section, we analyse the bounded rationality mechanism described in \eqref{eq:scalarubounded}, which captures incomparability arising from a limit of discernibility.
In this case, we need to define a  payoff for $DEF$  when $|\nu(x) - \nu(o)| \leq \sigma$, that is when  $\bH$  cannot distinguish between the two acts $x,o$. In this case, we assume that the payoff  is represented as the set $\{\nu(x)-\epsilon; \nu(o)-\epsilon\}$. Indeed, since $\bH$ cannot distinguish the two acts and chooses both of them, $\bH$'s payoff is a set, $\{\nu(x); \nu(o)\}$, and $\epsilon \in (0,\sigma]$ is a penalisation term introduced to penalise $\bH$'s payoff for being imprecise. This penalisation term is  similar to the one introduced in \cite{zaffalon2012evaluating} for evaluating imprecise classifiers. Using this framework, we can prove the following lemma.

\begin{lemma}
\label{lem:2}
Assume that  $p(\nu|\mathcal{D})= GP\left(\nu;\mu_p,K_p\right)$ is the GP posterior computed by $\bR$ from the prior  $p(\nu)= GP\left(\nu;\mu_0,K_0\right)$, the likelihood \eqref{eq:scalarubounded} and the message $m_j=\mathcal{D}$, then
 the expected payoffs of $\bR$'s actions  are:
 {\small
\begin{align}
\nonumber
DEF:&  \int_{T} \Big(\nu(o)I_{\{\nu(o) > \nu(x) + \sigma\}}+\nu(x)I_{\{\nu(x) > \nu(o) + \sigma\}}\\
\nonumber
&+\{\nu(x),\nu(o)\} I_{|\nu(o)-\nu(x)|\leq \sigma} \Big)dp(\nu(x),\nu(o)|m_j)\\
\nonumber
&=\mu_p(x)\left(1-\Phi\left(\tfrac{(\mu_p(o)+\sigma-\mu_p(x))}{\sqrt{K_p(x,x)+K_p(o,o)-2K_p(x,o)}}\right)\right)\\
\nonumber
& + \tfrac{K_p(x,x)-K_p(x,o)}{\sqrt {K_p(x,x)+K_p(o,o)-2K_p(x,o)}}\phi\left(\tfrac{\mu_p(o)+\sigma-\mu_p(x)}{\sqrt{K_p(x,x)+K_p(o,o)-2K_p(x,o)}}\right)\\
\nonumber
&+\mu_p(o)\left(1-\Phi\left(\tfrac{(\mu_p(x)+\sigma-\mu_p(o))}{\sqrt{K_p(x,x)+K_p(o,o)-2K_p(x,o)}}\right)\right)\\
\nonumber
& + \tfrac{K_p(o,o)-K_p(x,o)}{\sqrt {K_p(x,x)+K_p(o,o)-2K_p(x,o)}}\phi\left(\tfrac{\mu_p(x)+\sigma-\mu_p(o)}{\sqrt{K_p(x,x)+K_p(o,o)-2K_p(x,o)}}\right)\\
\nonumber
&+\Bigg\{\mu_p(o)-\mu_p(o)\Bigg(2-\Phi\left(\tfrac{(\mu_p(x)+\sigma-\mu_p(o))}{\sqrt{K_p(x,x)+K_p(o,o)-2K_p(x,o)}}\right)\\
\nonumber
&-\Phi\left(\tfrac{(-\mu_p(x)+\sigma+\mu_p(o))}{\sqrt{K_p(x,x)+K_p(o,o)-2K_p(x,o)}}\right)\Bigg)\\
\nonumber
& + \tfrac{K_p(o,o)-K_p(x,o)}{\sqrt {K_p(x,x)+K_p(o,o)-2K_p(x,o)}}\Bigg(\phi\left(\tfrac{-\mu_p(x)+\sigma+\mu_p(o)}{\sqrt{K_p(x,x)+K_p(o,o)-2K_p(x,o)}}\right)\\
\nonumber
&-\phi\left(\tfrac{\mu_p(x)+\sigma-\mu_p(o)}{\sqrt{K_p(x,x)+K_p(o,o)-2K_p(x,o)}}\right)\Bigg)-\epsilon;\\
\nonumber
&\mu_p(x)-\mu_p(x)\Bigg(2-\Phi\left(\tfrac{(\mu_p(x)+\sigma-\mu_p(o))}{\sqrt{K_p(x,x)+K_p(o,o)-2K_p(x,o)}}\right)\\
\nonumber
&-\Phi\left(\tfrac{(-\mu_p(x)+\sigma+\mu_p(o))}{\sqrt{K_p(x,x)+K_p(o,o)-2K_p(x,o)}}\right)\Bigg)\\
\nonumber
& + \tfrac{K_p(o,o)-K_p(x,o)}{\sqrt {K_p(x,x)+K_p(o,o)-2K_p(x,o)}}\Bigg(\phi\left(\tfrac{-\mu_p(o)+\sigma+\mu_p(x)}{\sqrt{K_p(x,x)+K_p(o,o)-2K_p(x,o)}}\right)\\
\label{eq:DEFexp1}
&-\phi\left(\tfrac{\mu_p(o)+\sigma-\mu_p(x)}{\sqrt{K_p(x,x)+K_p(o,o)-2K_p(x,o)}}\right)\Bigg)-\epsilon\Bigg\},\\
\end{align}}
 {\small
\begin{align}
\label{eq:IMMexp1}
IMM:&  \int_{T}\nu(x)dp(\nu(x)n(o)|m_j)=\mu_p(x),\\
\label{eq:OFFexp1}
DoN:&  \int_{T}\nu(o)dp(\nu(x),\nu(o)|m_j)
=\mu_p(o),
\end{align}}
where $p(\nu(x),\nu(o)|m_j)$ is defined in \eqref{eq:multiprior3}.
\end{lemma}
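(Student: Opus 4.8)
The plan is as follows. The $IMM$ and $DoN$ rows are immediate: by \eqref{eq:multiprior3} the posterior marginals of $\nu(x)$ and $\nu(o)$ are $N(\mu_p(x),K_p(x,x))$ and $N(\mu_p(o),K_p(o,o))$, so their expectations are $\mu_p(x)$ and $\mu_p(o)$, which gives \eqref{eq:IMMexp1}--\eqref{eq:OFFexp1}. All the work is in the $DEF$ row, and it is purely a computation against the bivariate Gaussian \eqref{eq:multiprior3}; the discernibility likelihood \eqref{eq:scalarubounded} enters only through having produced $(\mu_p,K_p)$, so it can be treated as a black box here.

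First I would introduce the scalar statistic $\delta:=\nu(x)-\nu(o)$, which under \eqref{eq:multiprior3} is $N(\Delta,s^2)$ with $\Delta:=\mu_p(x)-\mu_p(o)$ and $s^2:=K_p(x,x)+K_p(o,o)-2K_p(x,o)$. Note that, unlike in Lemma \ref{lem:1} and \eqref{eq:DEFexp}, there is no additive $2\sigma^2$ here, precisely because mechanism \eqref{eq:scalarubounded} is noiseless. The three branches of \eqref{eq:scalarubounded} become the events $\{\delta>\sigma\}$, $\{\delta<-\sigma\}$ and $\{|\delta|\le\sigma\}$, so the $DEF$ integrand equals $\nu(x)I_{\{\delta>\sigma\}}+\nu(o)I_{\{\delta<-\sigma\}}$ on the ``comparable'' part and the set $\{\nu(x)-\epsilon,\nu(o)-\epsilon\}$ on $\{|\delta|\le\sigma\}$. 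Every contribution is therefore of the form $E[\nu(\cdot)\,I_{\{\delta\in B\}}]$ for $B$ a half-line or the symmetric interval $[-\sigma,\sigma]$.

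Next I would evaluate these by conditioning on $\delta$. Write $\nu(x)=\mu_p(x)+\rho_x(\delta-\Delta)+W_x$ with $\rho_x=(K_p(x,x)-K_p(x,o))/s^2$ and $W_x$ independent of $\delta$, and similarly $\nu(o)=\mu_p(o)+\rho_o(\delta-\Delta)+W_o$ with $\rho_o=(K_p(x,o)-K_p(o,o))/s^2$. Then $E[\nu(x)I_{\{\delta\in B\}}]=\mu_p(x)\,P(\delta\in B)+\rho_x\,E[(\delta-\Delta)I_{\{\delta\in B\}}]$, and likewise for $\nu(o)$. Two elementary one-dimensional Gaussian identities close the computation: for $Z\sim N(m,v)$ one has $P(a\le Z\le b)=\Phi(\tfrac{b-m}{\sqrt v})-\Phi(\tfrac{a-m}{\sqrt v})$ and $E[(Z-m)I_{\{a\le Z\le b\}}]=\sqrt v\,(\phi(\tfrac{a-m}{\sqrt v})-\phi(\tfrac{b-m}{\sqrt v}))$, allowing $a=-\infty$ or $b=+\infty$. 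Plugging in $m=\Delta$, $v=s^2$ and $(a,b)\in\{(\sigma,\infty),(-\infty,-\sigma),(-\sigma,\sigma)\}$, and using $1-\Phi(z)=\Phi(-z)$ together with the evenness of $\phi$ to normalise the arguments, reproduces all the $\Phi$- and $\phi$-terms in \eqref{eq:DEFexp1}; the multipliers $\rho_x s=\frac{K_p(x,x)-K_p(x,o)}{s}$ and $-\rho_o s=\frac{K_p(o,o)-K_p(x,o)}{s}$ are exactly those displayed.

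Finally, the incomparable region requires a convention for the expectation of the set-valued payoff $\{\nu(x)-\epsilon,\nu(o)-\epsilon\}$; I would take it (Aumann-style) to be the set of expectations of the measurable selections, whose two extreme points correspond to always selecting the $\nu(o)$-branch, resp.\ the $\nu(x)$-branch, on $\{|\delta|\le\sigma\}$. This yields the expected $DEF$ payoff as the two-point set with common part $A:=E[\nu(x)I_{\{\delta>\sigma\}}]+E[\nu(o)I_{\{\delta<-\sigma\}}]$ (the first four lines of \eqref{eq:DEFexp1}) and branch-dependent parts $E[(\nu(o)-\epsilon)I_{\{|\delta|\le\sigma\}}]$ and $E[(\nu(x)-\epsilon)I_{\{|\delta|\le\sigma\}}]$, each again expanded by the two identities above. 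The main obstacle is not conceptual but bookkeeping: tracking signs and the $\Phi(z)$-versus-$\Phi(-z)$ and $\phi(z)$-versus-$\phi(-z)$ rewritings across the dozen-or-so terms, and being consistent about where the $\epsilon$ penalty attaches under the chosen notion of set-valued expectation; beyond the single one-dimensional conditioning on $\delta$ there is no probabilistic content.
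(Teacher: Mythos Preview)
Your argument is correct and somewhat cleaner than the paper's. The paper evaluates $E[\nu(x)I_{\{\nu(x)>\nu(o)+\sigma\}}]$ (and its symmetric counterpart) by conditioning on $\nu(o)$, applying the inverse Mills formula \eqref{eq:mills} to the conditional law $p(\nu(x)\mid\nu(o))$, and then integrating each resulting term over $\nu(o)$ via the Owen identities \eqref{eq:ResGPDFint}--\eqref{eq:xResprobitint}; this is essentially a rerun of Lemma~\ref{lem:preference_learn} with a $+\sigma$ shift and no noise. The middle-region terms are then obtained by complementarity, $E[\nu(o)I_{\{|\delta|\le\sigma\}}]=\mu_p(o)-E[\nu(o)I_{\{\delta<-\sigma\}}]-E[\nu(o)I_{\{\delta>\sigma\}}]$. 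Your route instead conditions on the single linear combination $\delta=\nu(x)-\nu(o)$ that determines all three events, reducing everything at once to one-dimensional truncated-normal moments; this bypasses the two-stage integration and the Owen table at the small cost of first writing down the regression $\nu(\cdot)=\mu_p(\cdot)+\rho_{(\cdot)}(\delta-\Delta)+W$. Both paths land on the same $\Phi$/$\phi$ expressions with the same coefficients $\rho_x s=(K_p(x,x)-K_p(x,o))/s$ and $-\rho_o s=(K_p(o,o)-K_p(x,o))/s$. One small point: your Aumann convention on the set-valued piece yields $-\epsilon\,P(|\delta|\le\sigma)$, whereas the displayed formula in \eqref{eq:DEFexp1} has a bare $-\epsilon$; the paper's own proof is silent on this and simply tacks the $-\epsilon$ on, so you are right to flag it as a convention/bookkeeping matter rather than a substantive step.
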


The payoff for DEF can be a set and this leads to incomparability,  a complete preference order cannot be established. Therefore, the players
are restricted to making comparisons based solely on
dominance. We consider the two possible dominance conditions defined in  \eqref{eq:pareto} (we refer to it as criterion (A)) and \eqref{eq:pseudoratio} (we refer to it as criterion (B)).

\begin{proposition}
\label{prop:3}
The optimal decisions for $\bR$ are:
\begin{itemize}
\item If $S$ is \textbf{rational} and $R$ has \textbf{no uncertainty}, then DEF is never dominated.
\item If $S$ is \textbf{bounded-rational} and $R$ has \textbf{no uncertainty},  DEF is always dominated whenever $|\nu(x) - \nu(o)| \leq \sigma$, otherwise DEF is not dominated.
\item If $S$ is \textbf{rational} and $R$ has \textbf{uncertainty}, then DEF is always optimal.
\item If $S$ is \textbf{bounded-rational} and $R$ has \textbf{uncertainty}, then DEF is optimal if 
\begin{description}
\item[(A)] the minimum of \eqref{eq:DEFexp1} is larger or equal than the maximum between \eqref{eq:IMMexp1} and \eqref{eq:OFFexp1}.
\item[(B)] the maximum of \eqref{eq:DEFexp1} is larger or equal than the maximum between \eqref{eq:IMMexp1} and \eqref{eq:OFFexp1}.
\end{description}
\end{itemize}
\end{proposition}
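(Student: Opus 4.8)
The plan is to derive all four cases by specialising the closed forms of Lemma~\ref{lem:2} to the relevant limits. Write $s^2=K_p(x,x)+K_p(o,o)-2K_p(x,o)$ and read \eqref{eq:DEFexp1} as a scalar part (the first four lines, contributed by the two discernible regions $\nu(x)>\nu(o)+\sigma$ and $\nu(o)>\nu(x)+\sigma$) plus a two-element set part $\{D_1,D_2\}$ (the braced lines, contributed by the band $|\nu(x)-\nu(o)|\le\sigma$, where the $\epsilon$-penalty enters each component weighted by the posterior mass $P(|\nu(x)-\nu(o)|\le\sigma)$ of that band). Since $IMM$ has payoff $\mu_p(x)$ and $DoN$ has payoff $\mu_p(o)$ by \eqref{eq:IMMexp1}--\eqref{eq:OFFexp1}, every claim reduces to comparing the (real- or set-valued) $DEF$ payoff against $\max(\mu_p(x),\mu_p(o))$.

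For the two ``no uncertainty'' subcases I let $K_p(x,x),K_p(o,o),K_p(x,o)\to 0$, so the posterior \eqref{eq:multiprior3} concentrates at $(\mu_p(x),\mu_p(o))$: each $\Phi(\cdot)$ in \eqref{eq:DEFexp1} tends to $0$ or $1$ according to the sign of its numerator, each $\phi(\cdot)$ and each ratio $K_p(\cdot,\cdot)/s$ tends to $0$, and $P(|\nu(x)-\nu(o)|\le\sigma)\to I_{\{|\mu_p(x)-\mu_p(o)|\le\sigma\}}$. Hence $DEF\to\mu_p(x)$ if $\mu_p(x)>\mu_p(o)+\sigma$, $DEF\to\mu_p(o)$ if $\mu_p(o)>\mu_p(x)+\sigma$, and $DEF\to\{\mu_p(o)-\epsilon,\mu_p(x)-\epsilon\}$ if $|\mu_p(x)-\mu_p(o)|\le\sigma$. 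When $S$ is bounded-rational this is exactly the statement (recalling that no uncertainty forces $\nu(x)=\mu_p(x),\nu(o)=\mu_p(o)$): outside the band $DEF$ attains $\max(\mu_p(x),\mu_p(o))$ and is undominated, inside the band every element of the $DEF$ set is strictly below $\max(\mu_p(x),\mu_p(o))$, so $IMM$ or $DoN$ strictly dominates $DEF$ and $DEF$ is discarded by both criteria (A) and (B). When $S$ is rational one additionally sends $\sigma\to 0$, hence $\epsilon\le\sigma\to 0$; the band degenerates and $DEF\to\max(\mu_p(x),\mu_p(o))$ in every configuration of the means, so $DEF$ is never dominated.

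For ``$S$ rational, $R$ uncertain'' I keep $K_p$ fixed and send $\sigma\to 0$ (so $\epsilon\to 0$). The band then has vanishing posterior mass, so the set part of \eqref{eq:DEFexp1} collapses to a point and the scalar part, evaluated at $\sigma=0$, simplifies via $\phi(-z)=\phi(z)$, $1-\Phi(-z)=\Phi(z)$ and $s^2/s=s$ to $\mu_p(o)+(\mu_p(x)-\mu_p(o))\Phi(\theta)+s\,\phi(\theta)$ with $\theta=(\mu_p(x)-\mu_p(o))/s$; one recognises this as $\int_T\max(\nu(x),\nu(o))\,dp(\nu(x),\nu(o)|m_j)$. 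Convexity of $\max$ gives this is $\ge\max(\mu_p(x),\mu_p(o))$, and in fact strictly, since $\mu_p(o)+(\mu_p(x)-\mu_p(o))\Phi(\theta)+s\phi(\theta)-\mu_p(x)=s\big(\phi(\theta)-\theta\Phi(-\theta)\big)>0$ by the Mills-ratio inequality $\theta\Phi(-\theta)<\phi(\theta)$ (and symmetrically against $\mu_p(o)$). Thus $DEF$ strictly dominates $IMM$ and $DoN$ and is optimal under both (A) and (B).

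The remaining case is the genuinely set-valued one: with $\sigma>0$ and $K_p$ nondegenerate the $DEF$ payoff is the two-point set $\{D_1,D_2\}$ of \eqref{eq:DEFexp1}, while $IMM,DoN$ stay at $\mu_p(x),\mu_p(o)$. It then only remains to instantiate the two dominance notions: under criterion (A)---Pareto dominance \eqref{eq:pareto} with the two realisations of the imprecise $DEF$ payoff playing the role of the coordinates of $\boldsymbol{\nu}$---$DEF$ is at least as good as an alternative only if its \emph{worst} realisation is, so $DEF$ is optimal once $\min(D_1,D_2)\ge\max(\mu_p(x),\mu_p(o))$; under criterion (B)---the e-admissibility rule \eqref{eq:pseudoratio}---$DEF$ lies in the choice set as soon as one of its realisations attains the maximum, i.e. once $\max(D_1,D_2)\ge\max(\mu_p(x),\mu_p(o))$. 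The main obstacle I anticipate is the bookkeeping in the third case: verifying cleanly that the $\sigma\to0$ limit of \eqref{eq:DEFexp1} is exactly the expected maximum, and that the $\epsilon$-terms---carrying weight $P(|\nu(x)-\nu(o)|\le\sigma)$---vanish in every limit where the band degenerates; a secondary care point is, in the last case, making the comparison of a set-valued payoff against scalar payoffs correspond precisely to the choice functions \eqref{eq:pareto} and \eqref{eq:pseudoratio}.
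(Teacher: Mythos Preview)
Your proposal is correct and follows essentially the same approach as the paper: both argue by specialising Lemma~\ref{lem:2} to the four limiting regimes and then comparing the resulting $DEF$ payoff (scalar or set-valued) against $\max(\mu_p(x),\mu_p(o))$. Your treatment is in fact more explicit than the paper's---you carry out the $\sigma\to 0$ simplification directly and invoke the Mills-ratio inequality for strictness, whereas the paper simply defers the third case to Proposition~\ref{prop:1}; conversely, note that the $-\epsilon$ in \eqref{eq:DEFexp1} appears unweighted in the paper's formula, but since $\epsilon\le\sigma$ your ``weighted'' reading does not affect any of the limits you take.
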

 We can then conclude that:
\begin{tcolorbox}[width=\linewidth, sharp corners=all, colback=white!95!black]
If $\bH$ is rational, then $\bR$  will never disable the off-switch. If $\bH$ is bounded-rational, a necessary condition for $\bR$ not to disable the off switch is the presence of uncertainty.
\end{tcolorbox}
These conclusions are similar to those derived from Proposition \ref{prop:1} but, in this case, we have proved them with a mechanism of bounded rationality that is fully deterministic.

\subsection{Lies and deception}
Consider a choice set comprising of two acts $A=\{y,z\}$, then, according to the bounded-rationality model in the previous section, $\bH$ can send three possibles messages:
$$
\text{either }~~ C(A)=\{y\} ~~\text{ or }~~ C(A)=\{z\} ~~\text{ or }~~ C(A)=\{y,z\}.
$$
This is true for any $A_i$ in $\mathcal{D}=(A_i,C(A_i))_{i=1}^n$ and, therefore, there are $3^n$ possible messages in this game. In the previous section, we assumed that $\bH$ sends the message defined by their type, that is, for instance, $C(A)=\{y\}$ if $\nu(y)>\nu(z)+\sigma$. We call this message the \textit{honest message}.

Since in the AI-assistance game, $\bR$ aims to maximise $\bH$'s payoffs. It is easy to verify the following:

\begin{proposition}
\label{prop:honestr}
In the AI-assistance game, sending the honest message is always the best action for $\bH$,
\end{proposition}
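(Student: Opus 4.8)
The plan is to unfold the definitions in Requirements \ref{req:2R} and \ref{req:2S} and observe that, because the game is cheap-talk (payoffs do not depend on $m$ directly) and $\bR$'s only objective is to maximise $\bH$'s utility (Principle P1), any gain $\bH$ could obtain by deviating to a non-honest message is already available to $\bH$ under the honest message. Concretely, fix a type $t=(\nu,\mathbf n)$ and let $m_h$ be the honest message it induces. I would first record that, since $\bR$ plays $a^*(m)=\arg\max_{a\in A}\mathbb E_{p(t\mid m)}u_R(t,m,a,b^*(a))$ and $u_R=u_S$ on the relevant outcomes (the AI-assistance assumption), $\bR$'s chosen action maximises $\bH$'s \emph{expected} payoff under the posterior belief attached to whatever message is received. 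The key point is then a comparison between the action $a^*(m_h)$ that $\bR$ takes in response to the honest message and the action $a^*(m')$ for any alternative $m'$: for the honest message the posterior $p(t\mid m_h)$ is (by Requirement \ref{req:3}) the Bayes-updated belief that correctly reflects the true preference data, so the realised action is a best response \emph{to the truth}, whereas for a deviating $m'$ the action is a best response to a belief that $\bH$ has deliberately distorted.

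The main steps, in order: (1) Write $\bH$'s payoff from sending $m$ at type $t$ as $V(t,m) = \max_{b\in B} u_S\big(t,m,a^*(m),b\big)$ — i.e., $\bH$ gets to pick its stage-4 action optimally after $\bR$ has moved, so $V(t,m) = \nu(x)$ if $a^*(m)=IMM$, $=\nu(o)$ if $a^*(m)=DoN$, and $=\max(\nu(x),\nu(o))$ (or the appropriate set in the criterion-(A)/(B) sense of Proposition \ref{prop:3}) if $a^*(m)=DEF$, since under $DEF$ $\bH$ can choose $\neg OFF$ or $OFF$ freely. (2) Observe that in every case $V(t,m) \le \max(\nu(x),\nu(o))$, with equality attainable: whatever $\bR$ does, $\bH$'s realised utility is at most $\max(\nu(x),\nu(o))$, and if $\bR$ plays $DEF$, $\bH$ can \emph{achieve} $\max(\nu(x),\nu(o))$ by overriding its own noisy stage-4 impulse — here one has to be slightly careful that in the bounded-rational models $\bH$'s stage-4 choice is itself noisy, so I would argue about the \emph{message} $\bH$ controls deterministically, separating it from the noise $\mathbf n$ that $\bH$ does not control. (3) Show that the honest message attains this upper bound, or at least does no worse than any deviation: under $m_h$, Requirement \ref{req:3} forces $\bR$'s belief to be the correct Bayesian posterior, and since $u_R=u_S$, $\bR$'s maximiser $a^*(m_h)$ maximises $\bH$'s own expected continuation payoff over $\{IMM,DoN,DEF\}$ — but that maximum is exactly the best $\bH$ could hope $\bR$ would do on its behalf given the same information, so no alternative message $m'$, which merely makes $\bR$ optimise against a wrong belief, can induce a better $\bR$-action for $\bH$. (4) Conclude $V(t,m_h) \ge V(t,m')$ for all $m'\in M$ and all $t$, which is Requirement \ref{req:2S} being met by the honest strategy, hence honesty is a best response.

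The step I expect to be the genuine obstacle is step (3): making rigorous the claim ``$\bR$ optimising $u_S$ against the \emph{true} posterior is at least as good for $\bH$ as $\bR$ optimising against a \emph{manipulated} posterior.'' This is intuitively the revelation-principle-flavoured heart of the argument, but it needs care because the sets of actions and the off/on-equilibrium-path belief bookkeeping (Requirement \ref{req:3} pins down beliefs only on the equilibrium path; off-path beliefs are free) interact with the $3^n$ possible messages. I would handle it by noting that $\bH$'s payoff depends on $m$ only through $a^*(m)\in\{IMM,DoN,DEF\}$ (cheap talk), so the deviation question reduces to: can some message make $\bR$ pick an action that yields $\bH$ strictly more than the action $\bR$ picks under $m_h$? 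Since under $m_h$ $\bR$ already picks the $\{IMM,DoN,DEF\}$-action maximising $\mathbb E[u_S\mid \text{truth}]$, and $\bH$'s realised payoff $V(t,\cdot)$ at the true $t$ is bounded by what that maximiser delivers once we account for $\bH$'s freedom at stage 4 under $DEF$, no deviation helps; a short case split on which of the three actions $\bR$ takes under $m_h$ closes it. I would also remark, as the paper's ``easy to verify'' suggests, that the cheap-talk hypothesis is doing the real work — without it, messages would enter $u_S$ directly and honesty could fail — and that this is why the statement is phrased for the cheap-talk signalling game of Section \ref{sec:signalling}.
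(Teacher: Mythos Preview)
Your plan is considerably more elaborate than the paper's. The paper's proof is essentially two sentences: the message matters only when $\bR$'s response is not $DEF$ (under $DEF$, $\bH$'s stage-4 payoff is fixed by the type, independently of the message), and in the non-$DEF$ case $\bR$ chooses between $IMM$ and $DoN$ by comparing $\mu_p(x)$ with $\mu_p(o)$; a biased message that flips this comparison relative to the true $\nu(x),\nu(o)$ ordering makes $\bR$ pick the act that is worse for $\bH$. There is no revelation-principle machinery and no uniform bound over deviations.

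Your step (3), which you rightly flag as the crux, has a real gap. You argue that under $m_h$, $\bR$ ``already picks the action maximising $\mathbb{E}[u_S\mid\text{truth}]$'' and hence no deviation helps. But $\bR$ maximises the \emph{posterior expectation} $\mathbb{E}_{p(t\mid m_h)}[u_S]$, whereas $\bH$ at type $t$ wants to maximise the \emph{realised} $u_S(t,\cdot)$; $\bH$ has strictly more information than $\bR$, and nothing in your argument excludes a manipulated message $m'$ whose induced action $a^*(m')$ is worse in posterior expectation yet better at this particular $t$. Concretely: if the honest message yields a GP posterior with $\mu_p(o)>\mu_p(x)$ so that $\bR$ plays $DoN$, while the true type has $\nu(x)>\nu(o)$, then $\bH$ would strictly prefer a deviation that tips $\bR$ toward $IMM$ or $DEF$; your proposed ``short case split on which of the three actions $\bR$ takes under $m_h$'' does not handle this direction. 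The claim that ``$\bH$'s realised payoff $V(t,\cdot)$ is bounded by what that maximiser delivers'' is exactly the thing to be proved, not a lemma you can invoke. The paper sidesteps the issue by not attempting a pointwise-over-types optimality argument: it simply observes that a lie which reverses the $\mu_p$ ordering away from the true $\nu$ ordering hurts $\bH$, which is a narrower (and more directly checkable) claim than the one your framework sets out to establish.
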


 or, in other words, 
 
 \begin{tcolorbox}[width=\linewidth, sharp corners=all, colback=white!95!black]
In the AI-assistance game, $\bH$ does not have incentives to deceive $\bR$. 
\end{tcolorbox}

For a bounded-rationality model like the one in \eqref{eq:unoise} (with random noise), $\bH$ may be in a situation where $\nu(x) > \nu(o)$, but $\nu(x) + n(x) < \nu(o) + n(o)$ (due to noise, and thus due to $\bH$'s bounded rationality). In this case, the best action for $\bH$ is to send a message to $\bR$ that results in $\bR$ computing the wrong estimate, $\mu_p(x) < \mu_p(o)$.

By maximising its own payoff, $\bR$ would also maximise $\bH$'s payoff. However, this strategy depends on the realisation of the noise and would not be the optimal strategy in the context of repeated games, where the two players interact after sending the message at the beginning of the game. In this case, $\bH$ would aim to maximise their expected payoff in the long run. 
We will leave the proof of this to future work, as it involves consistency results for repeated games. For the remainder of the paper, we will assume that $\bH$ will always send the \textit{honest message}.

\section{Vector-valued payoffs}
In the previous sections, we have assumed the underlying decision problem to have a single utility. In a case where we have competing utilities, this leads to vector-valued payoff \cite{shapley1959equilibrium} in the AI-assistance game. In this case, the payoff in \eqref{eq:payoffprop1R} becomes:
\begin{equation}
\label{eq:payoffprop1R}
\begin{aligned}
u_R(t,DEF,b^*(DEF))&=\boldsymbol{\nu}(o)I_{\{\boldsymbol{\nu}(o)+{\bf n}(o)\succ \boldsymbol{\nu}(x)+{\bf n}(x)\}}
\\
&+\boldsymbol{\nu}(x)I_{\{\boldsymbol{\nu}(x)+{\bf n}(x)\succ \boldsymbol{\nu}(o)+{\bf n}(o)\}},\\
u_R(t,IMM,\varnothing)&=\boldsymbol{\nu}(x),\\
u_R(t,DoN,\varnothing)&=\boldsymbol{\nu}(o).
\end{aligned}
\end{equation}
Therefore, a complete preference order cannot be established due to potentially conflicting utilities. Players are restricted to making comparisons based solely on dominance. We consider the dominance condition $\succ$ as discussed earlier in \eqref{eq:pareto} (Pareto dominance) or in \eqref{eq:pseudoratio}, along with the random noise model for bounded rationality.

\begin{proposition}
\label{prop:4}
The optimal decisions for $\bR$ are:
\begin{itemize}
\item If $S$ is \textbf{rational} and $R$ has \textbf{no uncertainty}, then DEF is  never dominated by IMM,DoN.
\item If $S$ is \textbf{bounded-rational} and $R$ has \textbf{no uncertainty},  then DEF is always dominated.
\item If $S$ is \textbf{rational} and $R$ has \textbf{uncertainty},  then DEF  is  never dominated.
\item If $S$ is \textbf{bounded-rational} and $R$ has \textbf{uncertainty}, the optimality of DEF depends on the specific case.
\end{itemize}
\end{proposition}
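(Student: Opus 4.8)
The plan is to analyse the four cases of Proposition~\ref{prop:4} by reducing the vector-valued problem to the structure already used in Propositions~\ref{prop:1} and~\ref{prop:3}, applied coordinatewise, and then invoking the definition of dominance ($\succ$) to decide when $DEF$ survives. First I would write out the expected payoff of $DEF$ in the vector setting: taking the expectation of \eqref{eq:payoffprop1R} with respect to the posterior $p(\boldsymbol{\nu}(x),\boldsymbol{\nu}(o)\mid m_j)$ and the noise $p({\bf n}(x),{\bf n}(o))$ yields a vector in $\mathbb{R}^d$ whose comparison with $\boldsymbol{\mu}_p(x)=\mathbb{E}[\boldsymbol{\nu}(x)\mid m_j]$ (the $IMM$ payoff) and $\boldsymbol{\mu}_p(o)$ (the $DoN$ payoff) is governed by the Pareto order. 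Note that the key difference from the scalar case is that the indicator $I_{\{\boldsymbol{\nu}(o)+{\bf n}(o)\succ\boldsymbol{\nu}(x)+{\bf n}(x)\}}$ is \emph{not} complementary to $I_{\{\boldsymbol{\nu}(x)+{\bf n}(x)\succ\boldsymbol{\nu}(o)+{\bf n}(o)\}}$: there is a third event, incomparability of $\boldsymbol{\nu}(x)+{\bf n}(x)$ and $\boldsymbol{\nu}(o)+{\bf n}(o)$, on which $\bH$'s choice (and hence the $DEF$ payoff) is the set $\{\boldsymbol{\nu}(x),\boldsymbol{\nu}(o)\}$, exactly as in Lemma~\ref{lem:2}. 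So the $DEF$ payoff is in general a set-valued object, and "DEF optimal" must mean optimal under criterion (A) (compare the Pareto-minimum of the set) or (B) (compare the Pareto-maximum).

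For the two "no uncertainty" cases, the posterior over $\boldsymbol{\nu}$ collapses to a Dirac at the true values $\boldsymbol{\nu}(x),\boldsymbol{\nu}(o)$, so $\boldsymbol{\mu}_p=\boldsymbol{\nu}$. If moreover $S$ is rational ($\sigma\to 0$), the noise vanishes too, and the $DEF$ payoff equals $\boldsymbol{\nu}(x)$ when $\boldsymbol{\nu}(x)\succ\boldsymbol{\nu}(o)$, equals $\boldsymbol{\nu}(o)$ when $\boldsymbol{\nu}(o)\succ\boldsymbol{\nu}(x)$, and equals the set $\{\boldsymbol{\nu}(x),\boldsymbol{\nu}(o)\}$ when the two are incomparable; in all three subcases one of $\{IMM,DoN\}$ achieves the same value as (a selection of) the $DEF$ payoff, so $DEF$ is never strictly dominated by $IMM$ or $DoN$ — this gives the first bullet. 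If instead $S$ is bounded-rational but $R$ has no uncertainty, then with positive probability the noise flips the comparison, so the expected $DEF$ payoff is a strict convex-type combination pulling it strictly inside the Pareto frontier spanned by $\boldsymbol{\nu}(x)$ and $\boldsymbol{\nu}(o)$; hence it is Pareto-dominated by $\max(\boldsymbol{\nu}(x),\boldsymbol{\nu}(o))$ coordinatewise whenever that max is attained, i.e.\ $DEF$ is always dominated — this is the mechanism already exploited in Proposition~\ref{prop:1}, now run coordinatewise, and I would make the convexity argument precise using that $\mathbb{E}[\nu_k(x)I_{E}+\nu_k(o)I_{E^c}] $ lies strictly between $\min$ and $\max$ of $\nu_k(x),\nu_k(o)$ for each coordinate $k$ when $0<P(E)<1$.

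For the third bullet ($S$ rational, $R$ uncertain), rationality kills the noise, so $DEF$'s payoff is $\boldsymbol{\nu}(x)$ on $\{\boldsymbol{\nu}(x)\succ\boldsymbol{\nu}(o)\}$, $\boldsymbol{\nu}(o)$ on $\{\boldsymbol{\nu}(o)\succ\boldsymbol{\nu}(x)\}$, and the set on the incomparability event — i.e.\ $DEF$ always returns (a selection of) the pointwise better act, so its expected payoff Pareto-dominates both $\mathbb{E}[\boldsymbol{\nu}(x)\mid m_j]=\boldsymbol{\mu}_p(x)$ and $\boldsymbol{\mu}_p(o)$ coordinatewise (using monotonicity of expectation applied to $\nu_k(\cdot)I_{\cdot}+\dots \ge \nu_k(x)$ and $\ge\nu_k(o)$ respectively on the relevant events), so $DEF$ is never dominated. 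The fourth bullet is then genuinely indeterminate: once both noise and posterior uncertainty are present, the expected $DEF$ payoff is a set, neither its Pareto-min nor its Pareto-max need dominate or be dominated by $\{\boldsymbol{\mu}_p(x),\boldsymbol{\mu}_p(o)\}$, and the outcome depends on $\boldsymbol{\mu}_p$, $K_p$, $\sigma$ and the choice of criterion (A) or (B) — exactly paralleling the last bullet of Propositions~\ref{prop:1} and~\ref{prop:3}, so I would simply state that no case-free conclusion holds and point to \eqref{eq:DEFexp1}-type expressions (now vectorial) as the decision rule. The main obstacle I anticipate is handling the incomparability event cleanly in the vector setting: unlike the scalar Gaussian-noise model, the three events do not partition into two complementary Gaussian tail probabilities, so the "strictly inside the frontier" convexity argument for the dominated cases needs to be stated coordinatewise and justified without a closed form for $P(\text{incomparable})$ — I would argue it purely from $0<P(\boldsymbol{\nu}(x)+{\bf n}(x)\succ\boldsymbol{\nu}(o)+{\bf n}(o))<1$ holding in at least one coordinate's marginal comparison, which suffices for strict Pareto-domination.
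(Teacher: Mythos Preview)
Your proposal is considerably more elaborate than the paper's own argument, and in one place this extra ambition creates a gap rather than filling one.

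The paper's proof of Proposition~\ref{prop:4} is deliberately modest: as the paragraph immediately following the proposition makes explicit, the four bullets are asserted \emph{only for the dominance case}, i.e.\ conditionally on $\boldsymbol{\nu}(x)$ and $\boldsymbol{\nu}(o)$ being comparable under $\succ$. Under that restriction the indicators in \eqref{eq:payoffprop1R} are complementary (there is no incomparability event to worry about), and each bullet reduces verbatim to the corresponding bullet of Proposition~\ref{prop:1} applied to the vector as a whole: rational $+$ no uncertainty gives $DEF=\max_{\succ}(\boldsymbol{\nu}(x),\boldsymbol{\nu}(o))$; bounded-rational $+$ no uncertainty gives $DEF=p\boldsymbol{\nu}(x)+(1-p)\boldsymbol{\nu}(o)$ with $p\in(0,1)$, which is strictly dominated by the $\succ$-larger of the two; rational $+$ uncertainty invokes the Jensen argument from Proposition~\ref{prop:1}; and the last bullet is simply declared case-dependent. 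That is the whole proof.

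Your route differs in that you try to treat the incomparability event head-on, importing the set-valued payoff device from Lemma~\ref{lem:2} and running a coordinatewise analysis. This is a reasonable reading of the vector payoff, and for bullets one, two and four your sketch is sound and strictly more informative than the paper's. The problem is bullet three. Your claim that ``$DEF$ always returns (a selection of) the pointwise better act'' and hence its $k$-th coordinate satisfies $\nu_k(\cdot)I_{\cdot}+\dots\ge\nu_k(x)$ and $\ge\nu_k(o)$ fails precisely on the incomparability event: there, no selection from $\{\boldsymbol{\nu}(x),\boldsymbol{\nu}(o)\}$ dominates both $\boldsymbol{\nu}(x)$ and $\boldsymbol{\nu}(o)$ coordinatewise (that is what incomparability means), so the pointwise inequality you want to integrate does not hold, and the coordinatewise Jensen step breaks. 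You flag this event as the ``main obstacle'' at the end, but your bullet-three argument has already implicitly assumed it away. The paper avoids the issue entirely by restricting the statement to the comparable case; if you want to keep your more general reading, bullet three needs either a different argument or a weaker conclusion (e.g.\ ``never dominated under criterion (B)'' rather than unconditionally).
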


The results in Proposition \ref{prop:4} are practically the same as in Proposition \ref{prop:1}. The reason is that we have stated the results only for the dominance case, that is when an act is better than the other act. In all  other cases, $\bR$ would be undecided because actions are incomparable.
We can expect is that, since estimating a vector of utilities is more difficult than estimating one utility, higher uncertainty would lead $\bR$ to defer more often to $\bH$.

\section{Numeric experiments}

\subsection{Bounded rationality of $\bR$}
\label{sec:numeric1}

In the previous sections, we have always assumed that $\bR$ is fully rational. However, even $\bR$ will have limited rationality, at least due to limited computational resources. Here, we will numerically assess the effect of $\bR$'s rationality. We will do this by comparing the following methods of approximating the posterior distribution $p(t|m_j)$ (computing the posterior is the computational bottleneck for $\bR$). The distribution  $p(t|m_j)$ is the posterior of $\nu$ given the choice-data. We consider the following approximations of the posterior:

\begin{itemize} \item MAP: $\bR$ computes the maximum a-posteriori estimate for $\nu$. This means there is no uncertainty representation. This is equivalent to estimating the utility using a Neural Network (NN) model. \item Laplace: The Laplace approximation is used to approximate the posterior with a GP, where the mean is the MAP estimate for $\nu$ and the covariance is equal to the observed Fisher information matrix.
 \item EP: Expectation Propagation, which approximates the posterior with a GP whose mean and covariance are computed  through moment matching. \item SkewGP: The posterior is a SkewGP, meaning there is no Gaussian approximation. Instead, sampling is necessary to compute inferences.
\end{itemize}
In terms of computational load, for a small dataset of preferences, the order from the cheapest to the heaviest is NN, Laplace, EP, and SkewGP.

We performed 1000 Monte Carlo simulations in which a utility $\nu$ was sampled from a GP with zero mean and a square-exponential kernel, with randomly generated length-scale and variance. We used the generated $\nu$ to create a dataset of 30 noisy preferences (with $\sigma=1$), which represents the message. We then simulated $\bR$ computing the posterior using the four approximations discussed above. For each case, we computed the optimal decision (DoN, IMM, DEF) for $\bR$.

Figure \ref{fig:comparisons} reports the percentage of decisions for the three actions. As proven in Proposition \ref{prop:1}, a robot $\bR$ that does not model uncertainty never defers to the human. This results is confirmed here as for the NN (MAP) based inference DEF is never optimal. For the other posterior approximations, the differences in the decisions are relatively small. SkewGP provides the decision closest to the optimal one. It is also well-known that EP provides a better approximation of the posterior than Laplace. However, the key message is that it is better to be approximately Bayesian (rational) than to ignore uncertainty entirely.

\begin{figure}
    \centering
\includegraphics[width=6cm]{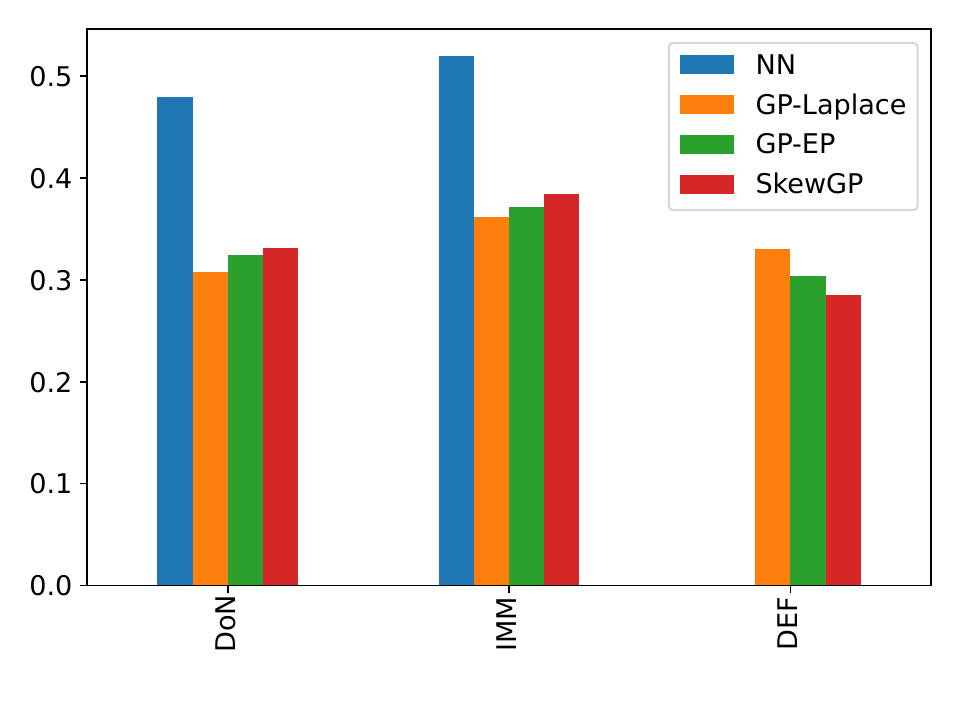}
    \caption{Percentage of decisions for the four approximations, with MAP denoted as NN.}
    \label{fig:comparisons}
\end{figure}

\section{Conclusion and future works}
In this paper, we have formulated the off-switch problem as a signalling game between a robot and a human. This approach allows us to model the problem as it would be implemented in practice through a machine learning framework. The human communicates their preference to the robot, which then uses this information to estimate the human's utilities (through some machine learning model) and decide whether to do nothing, take immediate action, or defer to the human. If the robot chooses not to defer, we interpret this as disabling its off-switch.

In this more realistic setting, as demonstrated in the original paper \cite{hadfield2017off}, we have proven that if the human is fully rational, the robot will never disable the off-switch. However, if the human is bounded-rational, a necessary condition for the robot to avoid disabling the off-switch is the presence of uncertainty. We have validated this statement under various models of bounded rationality, including scenarios with vector-valued payoffs.
The key takeaway is that AI systems should not be built using machine learning models that fail to account for uncertainty—such models risk disabling their own off-switch.

As future works, 
we would like to investigate how these results depend on the choice of the prior. In our example, we assumed that the prior is common knowledge in the game, but this is unlikely to hold in real-world applications. A key question is whether we can leverage knowledge of human utility to design priors that inherently favour deferring to humans. This is particularly important for high-stakes  applications. For instance, approaches similar to those in \cite{benavoli2006hard, benavoli2007estimation} could be explored to achieve this.

Finally,  the exchange scenario between an AI system and a human at the core of this work may actually involve more than one interaction step, meaning that it may be best interpreted (modelled) as a \emph{repeated} signalling game \cite{kaya2009repeated}. What are the consequences of framing the off-switch problem in such a dynamic setting is yet another question that we are planing to tackle next.

\newpage
\appendix

\section{Useful results}
In this section, we have listed some useful results involving Gaussian integrals \cite{owen1980table} that we will use in the proofs.

\begin{lemma}
List of useful Gaussian integrals:
\begin{align}
\label{eq:ResGPDFint}
 \int _{-\infty }^{\infty }\phi (x)\phi (a+bx)\,dx&={\frac {1}{\sqrt {1+b^{2}}}}\phi \left({\frac {a}{\sqrt {1+b^{2}}}}\right),\\
\label{eq:Resprobitint}
 \int _{-\infty }^{\infty }\Phi (a+bx)\phi (x)\,dx&=\Phi \left({\frac {a}{\sqrt {1+b^{2}}}}\right),\\
 \label{eq:xResprobitint}
 \int _{-\infty }^{\infty }x\Phi (a+bx)\phi (x)\,dx&={\frac {b}{\sqrt {1+b^{2}}}}\phi \left({\frac {a}{\sqrt {1+b^{2}}}}\right),
\end{align}
where $\Phi,\phi$ are the CDF and, respectively, PDF of a standard normal distribution.
\end{lemma}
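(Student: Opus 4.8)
The plan is to establish the three identities in sequence, deriving the first by direct computation and then bootstrapping the other two from it. For \eqref{eq:ResGPDFint}, I would write the product of densities explicitly as $\phi(x)\phi(a+bx)=\tfrac{1}{2\pi}\exp\left(-\tfrac12\left[x^2+(a+bx)^2\right]\right)$ and collect the exponent into a single quadratic in $x$, namely $(1+b^2)x^2+2abx+a^2$. Completing the square rewrites this as $(1+b^2)\left(x+\tfrac{ab}{1+b^2}\right)^2+\tfrac{a^2}{1+b^2}$, so the $x$-dependence reduces to one Gaussian bump whose integral is $\sqrt{2\pi/(1+b^2)}$, while the $x$-free factor is $\exp\left(-\tfrac{a^2}{2(1+b^2)}\right)$. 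Reassembling the constants against the $\tfrac{1}{2\pi}$ prefactor yields exactly $\tfrac{1}{\sqrt{1+b^2}}\phi\!\left(\tfrac{a}{\sqrt{1+b^2}}\right)$. This completing-the-square step is the only genuine calculation in the lemma; the remaining identities are then reduced to it.

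For \eqref{eq:Resprobitint}, the cleanest route is probabilistic: letting $X,Z$ be independent standard normals, I read the left-hand side as $\mathbb E_X\big[\Phi(a+bX)\big]=\mathbb E_X\big[P(Z\le a+bX\mid X)\big]=P(Z-bX\le a)$. Since $Z-bX$ is centred Gaussian with variance $1+b^2$, this probability equals $\Phi\!\left(\tfrac{a}{\sqrt{1+b^2}}\right)$. An alternative analytic derivation differentiates both sides in $a$: the derivative of the left-hand side is the integrand of \eqref{eq:ResGPDFint}, and the derivative of the right-hand side is $\tfrac{1}{\sqrt{1+b^2}}\phi\!\left(\tfrac{a}{\sqrt{1+b^2}}\right)$, so the two sides agree up to a constant, which is pinned to zero by letting $a\to-\infty$.

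For \eqref{eq:xResprobitint}, I would integrate by parts using $x\phi(x)=-\phi'(x)$. Writing the integral as $-\int \Phi(a+bx)\,\phi'(x)\,dx$ and moving the derivative onto $\Phi$, the boundary term $\left[-\Phi(a+bx)\phi(x)\right]_{-\infty}^{\infty}$ vanishes because $\phi$ decays to zero at $\pm\infty$ while $\Phi$ stays bounded, and $\tfrac{d}{dx}\Phi(a+bx)=b\,\phi(a+bx)$; what remains is $b\int \phi(a+bx)\phi(x)\,dx$, i.e.\ $b$ times \eqref{eq:ResGPDFint}, giving the claimed value. I do not expect any serious obstacle, as all three identities are classical (see the table in \cite{owen1980table}); the only points needing a line of justification are the interchange of differentiation and integration in the analytic variant of \eqref{eq:Resprobitint} and the vanishing of the boundary term in the integration by parts, both of which follow from the rapid Gaussian decay of the integrands via dominated convergence.
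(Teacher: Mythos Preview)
Your proof is correct and complete; each of the three derivations (completing the square, the probabilistic reading of $\mathbb{E}[\Phi(a+bX)]$, and integration by parts via $x\phi(x)=-\phi'(x)$) is the standard textbook route and goes through exactly as you describe. The paper, however, does not prove this lemma at all: it simply lists the three identities as known Gaussian integrals and attributes them to Owen's table \cite{owen1980table}, treating them as imported facts to be used in the subsequent proofs of Lemmas~\ref{lem:preference_learn} and~\ref{lem:2}. So rather than taking a different approach, you have supplied a self-contained argument where the paper is content to cite. What you gain is a paper that does not depend on an external reference for these formulas; what the paper gains by citing is brevity, since the identities are classical and the authors evidently regard them as background.
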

The \textit{inverse Mills ratio} states \cite{grimmett2001probability}:
\begin{lemma}
For $x \sim N(m,s^2)$, it states that:
\begin{equation}
\label{eq:mills}
\begin{aligned}
&E[xI_{\{a\leq x \leq b\}}]\\
&=m \left(\Phi\left(\tfrac{b-m}{s}\right)-\Phi\left(\tfrac{a-m}{s}\right)\right)-s \left(\phi\left(\tfrac{b-m}{s}\right)-\phi\left(\tfrac{a-m}{s}\right)\right).
\end{aligned}
\end{equation}
\end{lemma}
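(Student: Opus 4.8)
The plan is to compute the truncated first moment directly as a one-dimensional integral and to reduce it to standard-normal quantities by a linear change of variables. First I would write the expectation against the $N(m,s^2)$ density,
\begin{equation*}
E[x\,I_{\{a\leq x\leq b\}}]=\int_a^b \frac{x}{s\sqrt{2\pi}}\exp\left(-\frac{(x-m)^2}{2s^2}\right)dx,
\end{equation*}
and substitute $z=(x-m)/s$, so that $x=m+sz$ and $dx=s\,dz$, with the limits transforming to $\alpha:=(a-m)/s$ and $\beta:=(b-m)/s$. This turns the Gaussian density into the standard density $\phi(z)$ and yields
\begin{equation*}
E[x\,I_{\{a\leq x\leq b\}}]=\int_\alpha^\beta (m+sz)\,\phi(z)\,dz=m\int_\alpha^\beta \phi(z)\,dz+s\int_\alpha^\beta z\,\phi(z)\,dz.
\end{equation*}

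Next I would evaluate the two pieces separately. The first integral is immediate from the definition of the standard-normal CDF, namely $\int_\alpha^\beta \phi(z)\,dz=\Phi(\beta)-\Phi(\alpha)$, contributing the term $m\big(\Phi(\beta)-\Phi(\alpha)\big)$. For the second integral I would use the elementary identity $\frac{d}{dz}\phi(z)=-z\,\phi(z)$, which identifies $-\phi$ as an antiderivative of $z\phi(z)$; hence $\int_\alpha^\beta z\,\phi(z)\,dz=\phi(\alpha)-\phi(\beta)$, contributing $s\big(\phi(\alpha)-\phi(\beta)\big)=-s\big(\phi(\beta)-\phi(\alpha)\big)$. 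Combining the two contributions and substituting back $\alpha=(a-m)/s$ and $\beta=(b-m)/s$ gives exactly the claimed identity \eqref{eq:mills}.

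Since the argument is a routine change of variables followed by an elementary antiderivative, I do not expect any genuine obstacle; the only point requiring care is the bookkeeping of signs. Specifically, one must ensure that the boundary evaluation of $\int z\,\phi(z)\,dz$ enters with the correct negative sign, so that the PDF difference appears as $-s\big(\phi((b-m)/s)-\phi((a-m)/s)\big)$ rather than with the opposite sign. An alternative route would be integration by parts, writing $x=m+(x-m)$ and handling the $(x-m)$ factor against the derivative of the density, but the direct substitution above is the most transparent; it also mirrors the Gaussian-integral computations \eqref{eq:ResGPDFint}–\eqref{eq:xResprobitint} already used in the proofs.
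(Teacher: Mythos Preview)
Your proof is correct: the substitution $z=(x-m)/s$ followed by the identity $\phi'(z)=-z\phi(z)$ is the standard route, and your sign bookkeeping is right. Note that the paper does not actually prove this lemma; it simply quotes it as the inverse Mills ratio with a citation to \cite{grimmett2001probability}, so there is no ``paper's own proof'' to compare against---your argument supplies exactly the elementary derivation the paper omits.
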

From the above lemma, we can prove:
\begin{lemma}
For $x \sim N(m,s^2)$, 
\begin{equation}
\label{eq:absx}
\begin{aligned}
E[|x|]&=m \left(1-2\Phi\left(\tfrac{-m}{s}\right)\right)+2s \phi\left(\tfrac{-m}{s}\right).\\
\end{aligned}
\end{equation}
\end{lemma}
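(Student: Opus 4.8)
The plan is to reduce the computation of $E[|x|]$ to two applications of the inverse Mills ratio identity \eqref{eq:mills}, which is already established in the preceding lemma. The starting observation is the elementary decomposition of the absolute value into its positive and negative parts,
\begin{equation}
\nonumber
|x| = x\,I_{\{x \geq 0\}} - x\,I_{\{x < 0\}},
\end{equation}
valid pointwise for every real $x$. Taking expectations and using linearity gives
\begin{equation}
\nonumber
E[|x|] = E\big[x\,I_{\{0 \leq x < \infty\}}\big] - E\big[x\,I_{\{-\infty < x < 0\}}\big],
\end{equation}
so that each of the two terms on the right is an instance of the truncated first moment $E[x\,I_{\{a \leq x \leq b\}}]$ appearing in \eqref{eq:mills}.

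First I would handle the positive part by applying \eqref{eq:mills} with the limits $a = 0$ and $b \to \infty$. Here I use the boundary behaviour $\Phi((b-m)/s) \to 1$ and $\phi((b-m)/s) \to 0$ as $b \to \infty$, which yields $E[x\,I_{\{x \geq 0\}}] = m\big(1 - \Phi(-m/s)\big) + s\,\phi(-m/s)$. Next I would treat the negative part symmetrically, applying \eqref{eq:mills} with $a \to -\infty$ and $b = 0$; using $\Phi((a-m)/s) \to 0$ and $\phi((a-m)/s) \to 0$ as $a \to -\infty$ gives $E[x\,I_{\{x < 0\}}] = m\,\Phi(-m/s) - s\,\phi(-m/s)$.

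Subtracting the second expression from the first and collecting terms is then purely algebraic: the $m$-coefficients combine to $m\big(1 - 2\Phi(-m/s)\big)$ and the two $s\,\phi(-m/s)$ contributions add to $2s\,\phi(-m/s)$, delivering exactly \eqref{eq:absx}. I do not foresee a genuine obstacle here, since the heavy lifting is done by the already-proved Mills ratio lemma; the only point demanding a little care is the evaluation of the improper limits at $\pm\infty$, where one must confirm that the Gaussian density terms vanish and the CDF terms tend to the correct endpoints so that no spurious boundary contributions survive.
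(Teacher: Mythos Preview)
Your proposal is correct and follows essentially the same approach as the paper: decompose $|x| = x\,I_{\{x \geq 0\}} - x\,I_{\{x < 0\}}$, apply the Mills ratio identity \eqref{eq:mills} to each truncated moment with the appropriate limits, and combine. The paper's proof is terser but identical in substance.
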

\begin{proof}
Rewrite $|x|=xI_{\{x \geq 0\}}-xI_{\{x<0\}}$ and apply \eqref{eq:mills}:
\begin{equation}
\label{eq:millsabs0}
\begin{aligned}
&E[xI_{\{x \geq 0\}}]=m \left(1-\Phi\left(\tfrac{-m}{s}\right)\right)-s \left(-\phi\left(\tfrac{-m}{s}\right)\right)
\end{aligned}
\end{equation}
and
\begin{equation}
\label{eq:millsabs1}
\begin{aligned}
&E[xI_{\{x< 0\}}]=m \left(\Phi\left(\tfrac{-m}{s}\right)\right)-s \left(\phi\left(\tfrac{-m}{s}\right)\right)
\end{aligned}
\end{equation}
and, therefore,
\begin{equation}
\label{eq:millsabs2}
\begin{aligned}
E[|x|]&=m \left(1-2\Phi\left(\tfrac{-m}{s}\right)\right)+2s \phi\left(\tfrac{-m}{s}\right).
\end{aligned}
\end{equation}

\end{proof}
Finally, we prove the following main lemma which we will use to prove the results in the paper.
\begin{lemma}
\label{lem:preference_learn}
Consider $x,o \in \mathcal{X}$ and assume that 
\begin{equation}
\label{eq:multiprior3proof}
\begin{bmatrix}
\nu(x) \\ 
\nu(o) 
\end{bmatrix} \sim N\left(\begin{bmatrix}
\mu_p(x) \\ 
\mu_p(o) 
\end{bmatrix},\begin{bmatrix}
K_p(x,x) & K_p(x,o) \\ 
K_p(o,x) & K_p(o,o) 
\end{bmatrix}\right),
\end{equation}
and $n(x),n(o) \sim N(0,\sigma^2)$ (independent noise). Then we have that
\begin{equation}
\label{eq:res}
\begin{aligned}
&E[\nu(x)I_{\{\nu(x)+n(x)>\nu(o)+n(o)\}}]\\
&=\mu_p(x)\left(1-\Phi\left(\tfrac{(\mu_p(o)-\mu_p(x))}{\sqrt{K_p(x,x)+2\sigma^2+K_p(o,o)-2K_p(x,o)}}\right)\right)\\
& + \tfrac{K_p(x,x)-K_p(x,o)}{\sqrt {K_p(x,x)+2\sigma^2+K_p(o,o)-2K_p(x,o)}}\\
&~~\cdot \phi\left(\tfrac{\mu_p(o)-\mu_p(x)}{\sqrt{K_p(x,x)+2\sigma^2+K_p(o,o)-2K_p(x,o)}}\right)
\end{aligned}
\end{equation}

\end{lemma}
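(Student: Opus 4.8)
The plan is to collapse the two-dimensional integral to a one-dimensional one by introducing the single random variable $w := \nu(x)+n(x)-\nu(o)-n(o)$, so that $I_{\{\nu(x)+n(x)>\nu(o)+n(o)\}}=I_{\{w>0\}}$. First I would observe that, by \eqref{eq:multiprior3proof} together with the independence of $n(x),n(o)$ from each other and from $(\nu(x),\nu(o))$, the pair $(\nu(x),w)$ is jointly Gaussian with $E[\nu(x)]=\mu_p(x)$, $E[w]=\mu_p(x)-\mu_p(o)$, $\mathrm{Var}(w)=K_p(x,x)+K_p(o,o)-2K_p(x,o)+2\sigma^2=:s^2$, and $\mathrm{Cov}(\nu(x),w)=K_p(x,x)-K_p(x,o)$; note that the noise terms contribute nothing to this last covariance.

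Next I would apply the standard Gaussian conditioning decomposition $\nu(x)=\mu_p(x)+\tfrac{\mathrm{Cov}(\nu(x),w)}{s^2}\,(w-E[w])+\xi$, with $\xi$ zero-mean Gaussian and independent of $w$, and take expectations against $I_{\{w>0\}}$ to get
\[
E[\nu(x)I_{\{w>0\}}]=\mu_p(x)\,P(w>0)+\frac{K_p(x,x)-K_p(x,o)}{s^2}\,E[(w-E[w])I_{\{w>0\}}].
\]
The first term is immediate since $P(w>0)=\Phi(E[w]/s)=1-\Phi\big((\mu_p(o)-\mu_p(x))/s\big)$. For the second term I would invoke \eqref{eq:mills} applied to $w\sim N(E[w],s^2)$ with truncation bounds $a=0$ and $b\to\infty$, which gives $E[wI_{\{w>0\}}]=E[w]\big(1-\Phi(-E[w]/s)\big)+s\,\phi(-E[w]/s)$; subtracting $E[w]\,P(w>0)$ leaves $E[(w-E[w])I_{\{w>0\}}]=s\,\phi(-E[w]/s)=s\,\phi\big((\mu_p(o)-\mu_p(x))/s\big)$ by the symmetry $\phi(-t)=\phi(t)$. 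Substituting this back and simplifying $s/s^2=1/s$ yields exactly \eqref{eq:res}.

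I do not expect a genuine obstacle here; the computation is routine. The only points needing care are the sign bookkeeping inside the arguments of $\Phi$ and $\phi$ (so that the $2\sigma^2$ term ends up under the square root, as in the claimed formula) and the verification that $\mathrm{Cov}(\nu(x),w)$ carries no noise contribution. An alternative route that avoids the conditioning identity is to write the expectation as $\int \nu(x)\,\big(\int I_{\{\cdot\}}\,dP(n(x),n(o))\big)\,dP(\nu(x),\nu(o))$, carry out the inner Gaussian integral over the noise to produce a $\Phi$ factor, and then apply \eqref{eq:xResprobitint} after standardising $\nu(x)$; this reproduces the same expression.
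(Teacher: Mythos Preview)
Your argument is correct and considerably more streamlined than the paper's. The paper conditions on $\nu(o)$ and the noise difference $n(o)-n(x)$, writes down the conditional law $p(\nu(x)\mid\nu(o))$, applies the truncated-mean formula \eqref{eq:mills} conditionally, and then integrates out $\nu(o)$ and the noise in several stages using the identities \eqref{eq:ResGPDFint}--\eqref{eq:xResprobitint}; only after a final algebraic simplification \eqref{eq:var00} does the clean denominator $K_p(x,x)+K_p(o,o)-2K_p(x,o)+2\sigma^2$ emerge. Your route collapses all of this by passing at once to the single Gaussian $w=\nu(x)+n(x)-\nu(o)-n(o)$ and using the orthogonal decomposition $\nu(x)=\mu_p(x)+\tfrac{\mathrm{Cov}(\nu(x),w)}{s^2}(w-E[w])+\xi$; the variance $s^2$ and the covariance $K_p(x,x)-K_p(x,o)$ then appear directly, and only one application of \eqref{eq:mills} is needed. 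What your approach buys is transparency and brevity; what the paper's iterated-integration approach buys is a template that transfers with minimal change to the shifted-threshold case $I_{\{\nu(x)>\nu(o)+\sigma\}}$ treated in Lemma~\ref{lem:2}, though your method handles that equally well by redefining $w=\nu(x)-\nu(o)-\sigma$. The alternative you sketch at the end (integrate out the noise first to produce a $\Phi$ factor, then use \eqref{eq:xResprobitint}) is in fact closer in spirit to what the paper does.
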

\begin{proof}
We will compute $E[\nu(x)I_{\{\nu(x)>\nu(o)+n(o)-n(x)\}}]$ in two steps. First, we assume that $\nu(o),n(o)-n(x)$ are given and, therefore, we condition the joint PDF of $\nu(x),n(x),\nu(o),n(o)$ on $\nu(o),n(o)$. Since only the variables $\nu(x),\nu(o)$ are dependent, then we have
\begin{equation}
\begin{aligned}
&p(\nu(x)|\nu(o))=\\
&{\scriptstyle N\left(\nu(x);\mu_p(x)+\tfrac{K_p(x,o)}{K_p(o,o)}(\nu(o)-\mu_p(o)),
K_p(x,x)-\tfrac{K_p^2(x,o)}{K_p(o,o)}\right)}.
\end{aligned}
\end{equation}

Therefore, we can apply \eqref{eq:mills} conditionally on  $\nu(o),n(o)-n(x)$ which leads to
\begin{equation}
\label{eq:millscond}
\begin{aligned}
&E[\nu(x)I_{\{\nu(x)>\nu(o)+n(o)-n(x)\}}|\nu(o),n(o),n(x)]\\
&=m_1 \left(1-\Phi\left(\tfrac{\nu(o)+n(o)-n(x)-m_1}{\sigma_1}\right)\right)+\sigma_1 \phi\left(\tfrac{\nu(o)+n(o)-n(x)-m_1}{\sigma_1}\right)
\end{aligned}
\end{equation}
Now observe that
\begin{equation}
\label{eq:intm1}
\begin{aligned}
E[m_1]&=\int \left(\mu_p(x)+\tfrac{K_p(x,o)}{K_p(o,o)}(\nu(o)-\mu_p(o))\right)\\
&N(\nu(o);\mu_p(o),K_p(o,o))d\nu(o)
d\nu(o)=\mu_p(x).
\end{aligned}
\end{equation}
and
\begin{equation}
\label{eq:intm1}
\begin{aligned}
&E\left[m_1\Phi\left(\tfrac{\nu(o)+n(o)-n(x)-m_1}{\sigma_1}\right)\right]\\
&=E\left[\left(\mu_p(x)+\tfrac{K_p(x,o)}{K_p(o,o)}(\nu(o)-\mu_p(o))\right)\Phi\left(\tfrac{\nu(o)+n(o)-n(x)-m_1}{\sigma_1}\right)\right]\\
&=\left(\mu_p(x)-\tfrac{K_p(x,o)}{K_p(o,o)}\mu_p(o)\right) E\left[\Phi\left(\tfrac{\nu(o)+n(o)-n(x)-m_1}{\sigma_1}\right)\right]\\
&+\tfrac{K_p(x,o)}{K_p(o,o)}E\left[ \nu(o)\Phi\left(\tfrac{\nu(o)+n(o)-n(x)-m_1}{\sigma_1}\right)\right]
\end{aligned}
\end{equation}
The expectations are with respect to $\nu(o),n(o),n(x)$.
Now we use \eqref{eq:Resprobitint} to get the following result:
\begin{equation}
\label{eq:probitint1}
\begin{aligned}
&\int \Phi\left(\tfrac{\nu(o)+n(o)-n(x)-m_1}{\sigma_1}\right)N(n(o)-n(x);0,2\sigma^2)dn(o)\\
&=\Phi\left(\tfrac{\nu(o)-m_1}{\sqrt{\sigma_1^2+2\sigma^2}}\right),
\end{aligned}
\end{equation}
and so:
\begin{equation}
\label{eq:probitint2}
\begin{aligned}
&E\left[\Phi\left(\tfrac{\nu(o)+n(o)-n(x)-m_1}{\sigma_1}\right)\right]\\
&=\int \Phi\left(\tfrac{\nu(o)-m_1}{\sqrt{\sigma_1^2+2\sigma^2}}\right)N(\nu(o);\mu_p(o),K_p(o,o))d\nu(o)\\
&=\int \Phi\left(\tfrac{\nu(o)\tfrac{K_p(o,o)-K_p(x,o)}{K_p(o,o)}+m_2}{\sqrt{\sigma_1^2+2\sigma^2}}\right)\\
&~~~~~~~~N(\nu(o);\mu_p(o),K_p(o,o))d\nu(o)\\
&=\int \Phi\left(\tfrac{z\tfrac{K_p(o,o)-K_p(x,o)}{\sqrt{K_p(o,o)}}+m_2+\tfrac{K_p(o,o)-K_p(x,o)}{K_p(o,o)}\mu_p(o)}{\sqrt{\sigma_1^2+2\sigma^2}}\right)\\
&~~~~~~~~N(z;0,1)dz\\
&=\Phi\left(\tfrac{\mu_p(o)\tfrac{K_p(o,o)-K_p(x,o)}{\sqrt{K_p(o,o)}}+m_2\sqrt{K_p(o,o)}}{\sqrt{K_p(o,o)(\sigma_1^2+2\sigma^2)+(K_p(o,o)-K_p(x,o))^2}}\right)\\
&=\Phi\left(\tfrac{\sqrt{K_p(o,o)}(\mu_p(o)-\mu_p(x))}{\sqrt{K_p(o,o)(\sigma_1^2+2\sigma^2)+(K_p(o,o)-K_p(x,o))^2}}\right),\\
\end{aligned}
\end{equation}
with $m_2=\tfrac{-K_p(o,o)\mu_p(x)+K_p(x,o)\mu_p(o)}{
K_p(o,o)}$. Similarly, we have that
\begin{equation}
\label{eq:probitint2bis}
\begin{aligned}
&E\left[\nu(o)\Phi\left(\tfrac{\nu(o)+n(o)-n(x)-m_1}{\sigma_1}\right)\right]\\
&=\int \nu(o) \Phi\left(\tfrac{\nu(o)-m_1}{\sqrt{\sigma_1^2+2\sigma^2}}\right)N(\nu(o);\mu_p(o),K_p(o,o))d\nu(o)\\
&=\int \nu(o) \Phi\left(\tfrac{\nu(o)\tfrac{K_p(o,o)-K_p(x,o)}{K_p(o,o)}+m_2}{\sqrt{\sigma_1^2+2\sigma^2}}\right)\\
&~~~~~~~~N(\nu(o);\mu_p(o),K_p(o,o))d\nu(o)\\
&=\int \Phi\left(\tfrac{z\tfrac{K_p(o,o)-K_p(x,o)}{\sqrt{K_p(o,o)}}+m_2+\tfrac{K_p(o,o)-K_p(x,o)}{K_p(o,o)}\mu_p(o)}{\sqrt{\sigma_1^2+2\sigma^2}}\right)\\
& ~~~~\left(z \sqrt{K_p(o,o)}+\mu_p(o)\right)N(z;0,1)dz\\
\end{aligned}
\end{equation}
We separate the sum:
\begin{equation}
\label{eq:probitint2bis1}
\begin{aligned}
&\int \Phi\left(\tfrac{z\tfrac{K_p(o,o)-K_p(x,o)}{\sqrt{K_p(o,o)}}+m_2+\tfrac{K_p(o,o)-K_p(x,o)}{K_p(o,o)}\mu_p(o)}{\sqrt{\sigma_1^2+2\sigma^2}}\right)\\
& ~~~~\mu_p(o) N(z;0,1)dz\\
&=\mu_p(o)\Phi\left(\tfrac{\mu_p(o)\tfrac{K_p(o,o)-K_p(x,o)}{\sqrt{K_p(o,o)}}+m_2\sqrt{K_p(o,o)}}{\sqrt{K_p(o,o)(\sigma_1^2+2\sigma^2)+(K_p(o,o)-K_p(x,o))^2}}\right)\\
&=\mu_p(o)\Phi\left(\tfrac{\sqrt{K_p(o,o)}(\mu_p(o)-\mu_p(x))}{\sqrt{K_p(o,o)(\sigma_1^2+2\sigma^2)+(K_p(o,o)-K_p(x,o))^2}}\right).\\
\end{aligned}
\end{equation}
The other term in the sum
\begin{equation}
\label{eq:probitint2bis2}
\begin{aligned}
&\int \Phi\left(\tfrac{z\tfrac{K_p(o,o)-K_p(x,o)}{\sqrt{K_p(o,o)}}+m_2+\tfrac{K_p(o,o)-K_p(x,o)}{K_p(o,o)}\mu_p(o)}{\sqrt{\sigma_1^2+2\sigma^2}}\right)\\
& ~~~~z \sqrt{K_p(o,o)} N(z;0,1)dz\\
&=\tfrac{\sqrt{K_p(o,o)}(K_p(o,o)-K_p(x,o))}{\sqrt{K_p(o,o)(\sigma_1^2+\sigma^2)+(K_p(o,o)-K_p(x,o))^2}}\\
&\phi\left(\tfrac{\mu_p(o)\tfrac{K_p(o,o)-K_p(x,o)}{\sqrt{K_p(o,o)}}+m_2\sqrt{K_p(o,o)}}{\sqrt{K_p(o,o)(\sigma_1^2+2\sigma^2)+(K_p(o,o)-K_p(x,o))^2}}\right)\\
&=\tfrac{\sqrt{K_p(o,o)}(K_p(o,o)-K_p(x,o))}{\sqrt{K_p(o,o)(\sigma_1^2+2\sigma^2)+(K_p(o,o)-K_p(x,o))^2}}\\
&\phi\left(\tfrac{\sqrt{K_p(o,o)}(\mu_p(o)-\mu_p(x))}{\sqrt{K_p(o,o)(\sigma_1^2+2\sigma^2)+(K_p(o,o)-K_p(x,o))^2}}\right).\\
\end{aligned}
\end{equation}
where we have used \eqref{eq:xResprobitint}. 
Finally, we consider
\begin{equation}
\label{eq:probitint3pre}
\begin{aligned}
&\int \phi\left(\tfrac{\nu(o)+n(o)-n(x)-m_1}{\sigma_1}\right)N(n(o)-n(x);0,2\sigma^2)dn(o)\\
&=\tfrac{\sigma_1}{\sqrt{\sigma_1^2+2\sigma^2}}\phi\left(\tfrac{\nu(o)-m_1}{\sqrt{\sigma_1^2+2\sigma^2}}\right),\\
\end{aligned}
\end{equation}
where the last equality follows by \eqref{eq:ResGPDFint}. We use \eqref{eq:probitint2} to get:
\begin{equation}
\label{eq:probitint3} 
\begin{aligned}
&\int \tfrac{\sigma_1}{\sqrt{\sigma_1^2+2\sigma^2}}\phi\left(\tfrac{\nu(o)-m_1}{\sqrt{\sigma_1^2+2\sigma^2}}\right)N(\nu(o);\mu_p(o),K_p(o,o))d\nu(o)\\
&=\int \tfrac{\sigma_1}{\sqrt{\sigma_1^2+2\sigma^2}}\phi\left(\tfrac{\nu(o)\tfrac{K_p(o,o)-K_p(x,o)}{K_p(o,o)}+m_2}{\sqrt{\sigma_1^2+2\sigma^2}}\right)\\
&N(\nu(o);\mu_p(o),K_p(o,o))d\nu(o)\\
&=\int \tfrac{\sigma_1}{\sqrt{\sigma_1^2+2\sigma^2}}\phi\left(\tfrac{z\tfrac{K_p(o,o)-K_p(x,o)}{\sqrt{K_p(o,o)}}+m_2+\tfrac{K_p(o,o)-K_p(x,o)}{K_p(o,o)}\mu_p(o)}{\sqrt{\sigma_1^2+2\sigma^2}}\right)\\
&N(z;0,1)d\nu(o)\\
&=\tfrac{\sqrt{K_p(o,o)}\sigma_1}{\sqrt{K_p(o,o)(\sigma_1^2+2\sigma^2)+(K_p(o,o)-K_p(x,o))^2}}\\
&\phi\left(\tfrac{\sqrt{K_p(o,o)}(\mu_p(o)-\mu_p(x))}{\sqrt{K_p(o,o)(\sigma_1^2+2\sigma^2)+(K_p(o,o)-K_p(x,o))^2}}\right).\\
\end{aligned}
\end{equation}

Therefore, from \eqref{eq:millscond} and \eqref{eq:probitint2}--\eqref{eq:probitint3}, we obtain
\begin{equation}
\label{eq:millsthird}
\begin{aligned}
&E[(\nu(x)+n(x))I_{\{\nu(x)+n(x)>\nu(o)+n(o)\}}]=\mu_p(x)\\
&-\left(\mu_p(x)-\tfrac{K_p(x,o)}{K_p(o,o)}\mu_p(o)\right)\\
&\cdot \Phi\left(\tfrac{\sqrt{K_p(o,o)}(\mu_p(o)-\mu_p(x))}{\sqrt{K_p(o,o)(\sigma_1^2+2\sigma^2)+(K_p(o,o)-K_p(x,o))^2}}\right)\\
& - \tfrac{K_p(x,o)}{K_p(o,o)} \mu_p(o)\Phi\left(\tfrac{\sqrt{K_p(o,o)}(\mu_p(o)-\mu_p(x))}{\sqrt{K_p(o,o)(\sigma_1^2+2\sigma^2)+(K_p(o,o)-K_p(x,o))^2}}\right)\\
& - \tfrac{K_p(x,o)}{K_p(o,o)} \tfrac{\sqrt{K_p(o,o)}(K_p(o,o)-K_p(x,o))}{\sqrt{K_p(o,o)(\sigma_1^2+2\sigma^2)+(K_p(o,o)-K_p(x,o))^2}}\\
&\phi\left(\tfrac{\sqrt{K_p(o,o)}(\mu_p(o)-\mu_p(x))}{\sqrt{K_p(o,o)(\sigma_1^2+2\sigma^2)+(K_p(o,o)-K_p(x,o))^2}}\right)\\
&+\tfrac{\sqrt{K_p(o,o)}\sigma^2_1}{\sqrt{K_p(o,o)(\sigma_1^2+2\sigma^2)+(K_p(o,o)-K_p(x,o))^2}}\\
&\phi\left(\tfrac{\sqrt{K_p(o,o)}(\mu_p(o)-\mu_p(x))}{\sqrt{K_p(o,o)(\sigma_1^2+2\sigma^2)+(K_p(o,o)-K_p(x,o))^2}}\right)\\
&=\mu_p(x)\left(1-\Phi\left(\tfrac{\sqrt{K_p(o,o)}(\mu_p(o)-\mu_p(x))}{\sqrt{K_p(o,o)(\sigma_1^2+2\sigma^2)+(K_p(o,o)-K_p(x,o))^2}}\right)\right)\\
& + \tfrac{\sqrt{K_p(o,o)}(K_p(x,x)-K_p(x,o))}{\sqrt{K_p(o,o)(\sigma_1^2+2\sigma^2)+(K_p(o,o)-K_p(x,o))^2}}\\
&\phi\left(\tfrac{\sqrt{K_p(o,o)}(\mu_p(o)-\mu_p(x))}{\sqrt{K_p(o,o)(\sigma_1^2+2\sigma^2)+(K_p(o,o)-K_p(x,o))^2}}\right)\\
\end{aligned}
\end{equation}

Note that
\begin{equation}
\label{eq:var00}
\begin{aligned}
&K_p(o,o)(\sigma_1^2+2\sigma^2)+(K_p(o,o)-K_p(x,o))^2\\
&=K_p(o,o)\left(K_p(x,x)-\tfrac{K_p^2(x,o)}{K_p(o,o)}+2\sigma^2\right)\\
&+(K_p(o,o)-K_p(x,o))^2\\
&=K_p(o,o)K_p(x,x)-K_p^2(x,o)+2\sigma^2K_p(o,o)\\
&+K_p^2(o,o)+K_p^2(x,o)-2K_p(x,o)K_p(o,o)\\
&=K_p(o,o)(K_p(x,x)+2\sigma^2+K_p(o,o)-2K_p(x,o)).\\
\end{aligned}
\end{equation}
Therefore, we have that
\begin{equation}
\label{eq:millsthird11}
\begin{aligned}
&E[\nu(x)I_{\{\nu(x)+n(x)>\nu(o)+n(o)\}}]=\\
&=\mu_p(x)\left(1-\Phi\left(\tfrac{(\mu_p(o)-\mu_p(x))}{\sqrt{K_p(x,x)+2\sigma^2+K_p(o,o)-2K_p(x,o)}}\right)\right)\\
& + \tfrac{K_p(x,x)-K_p(x,o)}{\sqrt {K_p(x,x)+2\sigma^2+K_p(o,o)-2K_p(x,o)}}\\
&\phi\left(\tfrac{\mu_p(o)-\mu_p(x)}{\sqrt{K_p(x,x)+2\sigma^2+K_p(o,o)-2K_p(x,o)}}\right)
\end{aligned}
\end{equation}
    \end{proof}

\section{Proofs}
\label{app:proof}
We now move on to the main results.

\paragraph{Proof of Lemma \ref{lem:1}}
The expected value for $DEF$  follows from Lemma \ref{lem:preference_learn} by summing $E[\nu(x)I_{\{\nu(x)+n(x)>\nu(o)+n(o)\}}]$ and $E[\nu(o)I_{\{\nu(x)+n(x)<\nu(o)+n(o)\}}]$. The expected values for $IMM,OFF$ are straightforward.

\paragraph{Proof of Proposition \ref{prop:1}}
The results follow from Lemma \ref{lem:1}  by considering whether or not the limits $K_0(x,x),K_0(o,o),K_0(o,x) \rightarrow 0$ and $\sigma \rightarrow 0$ are taken. We make the assumption that whenever $K_o(x,x),K_o(o,o),K_o(o,x) \rightarrow 0$  it implies that $K_p(x,x),K_p(o,o),K_p(o,x) \rightarrow 0$ (a-priori we have a Dirac's delta). Since $K_p$ depends on both $K_0$ and $\sigma$, we always take the limit with respect to $\sigma$ first.

 If $S$ is \textbf{rational} and $R$ has \textbf{no uncertainty}, then the expected payoffs can be computed from \eqref{eq:DEFexp}, \eqref{eq:IMMexp} and \eqref{eq:OFFexp}.
  The values are
 $E[DEF]=\max(\mu_p(x),\mu_p(o))$
  $E[IMM]=\mu_p(x)$ and $E[OFF]=\mu_p(o)$. Therefore, DEF is never dominated.
  
   If $S$ is \textbf{bounded-rational} and $R$ has \textbf{no uncertainty}, then $\sigma>0$ and the payoffs are:
   $E[DEF]=p\mu_p(x)+(1-p)\mu_p(o)$, 
  $E[IMM]=\mu_p(x)$ and $E[OFF]=\mu_p(o)$, where
  $p=\Phi\left(\tfrac{\mu_p(x)-\mu_p(o)}{\sqrt{2\sigma^2}}\right)$
   Therefore, $p \in (0,1)$ and   
   DEF is never optimal.

If $S$ is \textbf{rational} and $R$ has \textbf{uncertainty}, then 
$E[DEF]=p\mu_p(x)+(1-p)\mu_p(o)+e$,   $E[IMM]=\mu_p(x)$ and $E[OFF]=\mu_p(o)$, where  $p=\Phi\left(\tfrac{\mu_p(x)-\mu_p(o)}{\sqrt{K_p(o,o)+K_p(x,x)-2K_p(x,o)}}\right)$ and 
$$
 \resizebox{0.91\hsize}{!}{$
\begin{aligned}
e&= \tfrac{K_p(x,x)-K_p(x,o)}{\sqrt {K_p(x,x)+K_p(o,o)-2K_p(x,o)}}
\phi\left(\tfrac{\mu_p(o)-\mu_p(x)}{\sqrt{K_p(x,x)+K_p(o,o)-2K_p(x,o)}}\right)\\
& + \tfrac{K_p(o,o)-K_p(x,o)}{\sqrt {K_p(x,x)+K_p(o,o)-2K_p(x,o)}}\phi\left(\tfrac{\mu_p(x)-\mu_p(o)}{\sqrt{K_p(x,x)+K_p(o,o)-2K_p(x,o)}}\right).
\end{aligned}$}
$$
When $\bH$ is rational, then 
\begin{equation}
\label{eq:payoffprop1Rproof}
\begin{aligned}
&u_R(t,DEF,b^*(DEF))=\nu(o)I_{\{\nu(o)>\nu(x)\}}
\\&+\nu(x)I_{\{\nu(x)>\nu(o)\}}=\max(\nu(o),\nu(x))
\end{aligned}
\end{equation}
$\max$ is a convex function and, therefore, by Jensen's inequality 
$E[\max(\nu(o),\nu(x))]\geq \max(E[\nu(o)],E[\nu(x)])$. Therefore, $p\mu_p(x)+(1-p)\mu_p(o)+e\geq \max(\mu_p(x),\mu_p(o))$ and DEF is always optimal.

The last case follows directly from Lemma \ref{lem:1}.

\paragraph{Proof of Corollary \ref{co:2}}
The results follow from Proposition \ref{prop:1} (and \eqref{eq:absx}) after subtracting $\beta$ to the expected payoff for DEF, where
\begin{align}
\nonumber
\beta &=\gamma E[|\nu(o)|] =\gamma \mu_p(o) \left(1-2\Phi\left(\tfrac{-\mu_p(o)}{\sqrt{K_p(o,o)}}\right)\right)\\
&+2\gamma\sqrt{K_p(o,o)} \phi\left(\tfrac{-\mu_p(o)}{\sqrt{K_p(o,o)}}\right).
\end{align}
 If $S$ is \textbf{rational} and $R$ has \textbf{no uncertainty}, then the expected payoffs can be computed from \eqref{eq:DEFexp}, \eqref{eq:IMMexp} and \eqref{eq:OFFexp}.
  The values are
 $E[DEF]=\max(\mu_p(x),\mu_p(o))-\gamma'|\mu_p(o)|$
  $E[IMM]=\mu_p(x)$ and $E[OFF]=\mu_p(o)$. Therefore, DEF is never optimal.

   If $S$ is \textbf{bounded-rational} and $R$ has \textbf{no uncertainty}, then $\sigma>0$ and the payoffs are:
   $E[DEF]=p\mu_p(x)+(1-p)\mu_p(o)-\gamma'|\mu_p(o)|$
  $E[IMM]=\mu_p(x)$ and $E[OFF]=\mu_p(o)$, where
  $p=\Phi\left(\tfrac{\mu_p(x)-\mu_p(o)}{\sqrt{2\sigma^2}}\right)$
   Therefore, $p \in (0,1)$ and   
   DEF is never optimal.

   If $S$ is \textbf{rational} and $R$ has \textbf{uncertainty}, then 
$E[DEF]=p\mu_p(x)+(1-p)\mu_p(o)+e-\gamma'|\mu_p(o)|$
  $E[IMM]=\mu_p(x)$ and $E[OFF]=\mu_p(o)$. Therefore, 
  DEF is optimal if 
$p\mu_p(x) + (1-p)\mu_p(o)+e-\gamma'|\mu_p(o)|\geq \max\left(\mu_p(x) ,\mu_p(o)\right)$.
The last case follows similarly from Proposition \ref{prop:1}

\paragraph{Proof of Lemma \ref{lem:2}}
The expected value for $DEF$  is equal to the sum of $E[\nu(x)I_{\{\nu(x)>\nu(o)+\sigma)\}}]$, $E[\nu(o)I_{\{\nu(o)>\nu(x)+\sigma)\}}]$ and $\{E[\nu(x)I_{\{|\nu(x)-\nu(o)|\leq \sigma)\}}],E[\nu(o)I_{\{|\nu(x)-\nu(o)|\leq \sigma)\}}]\}$.
 For fixed $\nu(x)$, we have that
 \begin{equation}
\begin{aligned}
&p(\nu(x)|\nu(o))=N(\nu(x);m_1,\sigma_1^2)=\\
&{\scriptstyle N\left(\nu(x);\mu_p(x)+\tfrac{K_p(x,o)}{K_p(o,o)}(\nu(o)-\mu_p(o)),
K_p(x,x)-\tfrac{K_p^2(x,o)}{K_p(o,o)}\right)}.
\end{aligned}
\end{equation}
Therefore, we can apply \eqref{eq:mills} conditionally on  $\nu(o)$ which leads to
\begin{equation}
\label{eq:millscond42}
\begin{aligned}
&E[\nu(x)I_{\{\nu(x)>\nu(o)+\sigma\}}|\nu(o)]\\
&=m_1 \left(1-\Phi\left(\tfrac{\nu(o)+\sigma-m_1}{\sigma_1}\right)\right)+\sigma_1 \phi\left(\tfrac{\nu(o)+\sigma-m_1}{\sigma_1}\right)
\end{aligned}
\end{equation}
Now observe that
\begin{equation}
\label{eq:intm142}
\begin{aligned}
E[m_1]&=\int \left(\mu_p(x)+\tfrac{K_p(x,o)}{K_p(o,o)}(\nu(o)-\mu_p(o))\right)\\
&N(\nu(o);\mu_p(o),K_p(o,o))d\nu(o)
d\nu(o)=\mu_p(x),
\end{aligned}
\end{equation}
and
\begin{equation}
\label{eq:intm1421}
\begin{aligned}
&E\left[m_1\Phi\left(\tfrac{\nu(o)+\sigma-m_1}{\sigma_1}\right)\right]\\
&=E\left[\left(\mu_p(x)+\tfrac{K_p(x,o)}{K_p(o,o)}(\nu(o)-\mu_p(o))\right)\Phi\left(\tfrac{\nu(o)+\sigma-m_1}{\sigma_1}\right)\right]\\
&=\left(\mu_p(x)-\tfrac{K_p(x,o)}{K_p(o,o)}\mu_p(o)\right) E\left[\Phi\left(\tfrac{\nu(o)+\sigma-m_1}{\sigma_1}\right)\right]\\
&+\tfrac{K_p(x,o)}{K_p(o,o)}E\left[ \nu(o)\Phi\left(\tfrac{\nu(o)+\sigma-m_1}{\sigma_1}\right)\right]
\end{aligned}
\end{equation}
The expectations are with respect to $\nu(o)$.
Now we use \eqref{eq:Resprobitint} to get 

\begin{equation}
\label{eq:probitint242}
\begin{aligned}
&E\left[\Phi\left(\tfrac{\nu(o)+\sigma-m_1}{\sigma_1}\right)\right]\\
&=\int \Phi\left(\tfrac{\nu(o)+\sigma-m_1}{\sigma_1}\right)N(\nu(o);\mu_p(o),K_p(o,o))d\nu(o)\\
&=\int \Phi\left(\tfrac{\nu(o)\tfrac{K_p(o,o)-K_p(x,o)}{K_p(o,o)}+m_2}{\sigma_1}\right)\\
&~~~~~~~~N(\nu(o);\mu_p(o),K_p(o,o))d\nu(o)\\
&=\int \Phi\left(\tfrac{z\tfrac{K_p(o,o)-K_p(x,o)}{\sqrt{K_p(o,o)}}+m_2+\tfrac{K_p(o,o)-K_p(x,o)}{K_p(o,o)}\mu_p(o)}{\sigma_1}\right)\\
&~~~~~~~~N(z;0,1)dz\\
&=\Phi\left(\tfrac{\mu_p(o)\tfrac{K_p(o,o)-K_p(x,o)}{\sqrt{K_p(o,o)}}+m_2\sqrt{K_p(o,o)}}{\sqrt{K_p(o,o)\sigma_1^2+(K_p(o,o)-K_p(x,o))^2}}\right)\\
&=\Phi\left(\tfrac{\sqrt{K_p(o,o)}(\mu_p(o)+\sigma-\mu_p(x))}{\sqrt{K_p(o,o)\sigma_1^2+(K_p(o,o)-K_p(x,o))^2}}\right),\\
\end{aligned}
\end{equation}
with $m_2=\tfrac{K_p(o,o)(\sigma-\mu_p(x))+K_p(x,o)\mu_p(o)}{
K_p(o,o)}$. Similarly, we have that
\begin{equation}
\label{eq:probitint2bis42}
\begin{aligned}
&E\left[\nu(o)\Phi\left(\tfrac{\nu(o)+\sigma-m_1}{\sigma_1}\right)\right]\\
&=\int \nu(o) \Phi\left(\tfrac{\nu(o)+\sigma-m_1}{\sigma_1}\right) N(\nu(o);\mu_p(o),K_p(o,o))d\nu(o)\\
&=\int \nu(o) \Phi\left(\tfrac{\nu(o)\tfrac{K_p(o,o)-K_p(x,o)}{K_p(o,o)}+m_2}{\sigma_1}\right)\\
&~~~~~~~~N(\nu(o);\mu_p(o),K_p(o,o))d\nu(o)\\
&=\int \Phi\left(\tfrac{z\tfrac{K_p(o,o)-K_p(x,o)}{\sqrt{K_p(o,o)}}+m_2+\tfrac{K_p(o,o)-K_p(x,o)}{K_p(o,o)}\mu_p(o)}{\sigma_1}\right)\\
& ~~~~\left(z \sqrt{K_p(o,o)}+\mu_p(o)\right)N(z;0,1)dz\\
\end{aligned}
\end{equation}
We separate the sum:
\begin{equation}
\label{eq:probitint2bis142}
\begin{aligned}
&\int \Phi\left(\tfrac{z\tfrac{K_p(o,o)-K_p(x,o)}{\sqrt{K_p(o,o)}}+m_2+\tfrac{K_p(o,o)-K_p(x,o)}{K_p(o,o)}\mu_p(o)}{\sigma_1}\right)\\
& ~~~~\mu_p(o) N(z;0,1)dz\\
&=\mu_p(o)\Phi\left(\tfrac{\mu_p(o)\tfrac{K_p(o,o)-K_p(x,o)}{\sqrt{K_p(o,o)}}+m_2\sqrt{K_p(o,o)}}{\sqrt{K_p(o,o)\sigma_1^2+(K_p(o,o)-K_p(x,o))^2}}\right)\\
&=\mu_p(o)\Phi\left(\tfrac{\sqrt{K_p(o,o)}(\mu_p(o)+\sigma-\mu_p(x))}{\sqrt{K_p(o,o)\sigma_1^2+(K_p(o,o)-K_p(x,o))^2}}\right).\\
\end{aligned}
\end{equation}
The other term in the sum
\begin{equation}
\label{eq:probitint2bis242}
\begin{aligned}
&\int \Phi\left(\tfrac{z\tfrac{K_p(o,o)-K_p(x,o)}{\sqrt{K_p(o,o)}}+m_2+\tfrac{K_p(o,o)-K_p(x,o)}{K_p(o,o)}\mu_p(o)}{\sigma_1}\right)\\
& ~~~~z \sqrt{K_p(o,o)} N(z;0,1)dz\\
&=\tfrac{\sqrt{K_p(o,o)}(K_p(o,o)-K_p(x,o))}{\sqrt{K_p(o,o)\sigma_1^2+(K_p(o,o)-K_p(x,o))^2}}\\
&\phi\left(\tfrac{\mu_p(o)\tfrac{K_p(o,o)-K_p(x,o)}{\sqrt{K_p(o,o)}}+m_2\sqrt{K_p(o,o)}}{\sqrt{K_p(o,o)(\sigma_1^2+2\sigma^2)+(K_p(o,o)-K_p(x,o))^2}}\right)\\
&=\tfrac{\sqrt{K_p(o,o)}(K_p(o,o)-K_p(x,o))}{\sqrt{K_p(o,o)\sigma_1^2+(K_p(o,o)-K_p(x,o))^2}}\\
&\phi\left(\tfrac{\sqrt{K_p(o,o)}(\mu_p(o)+\sigma-\mu_p(x))}{\sqrt{K_p(o,o)\sigma_1^2+(K_p(o,o)-K_p(x,o))^2}}\right).\\
\end{aligned}
\end{equation}
where we have used \eqref{eq:xResprobitint}. 
Finally, we consider
\begin{equation}
\label{eq:probitint342} 
\begin{aligned}
&\int \phi\left(\tfrac{\nu(o)+\sigma-m_1}{\sigma_1}\right)N(\nu(o);\mu_p(o),K_p(o,o))d\nu(o)\\
&=\int \phi\left(\tfrac{z\tfrac{K_p(o,o)-K_p(x,o)}{\sqrt{K_p(o,o)}}+m_2+\tfrac{K_p(o,o)-K_p(x,o)}{K_p(o,o)}\mu_p(o)}{\sigma_1}\right)\\
&N(z;0,1)d\nu(o)\\
&=\tfrac{\sqrt{K_p(o,o)}\sigma_1}{\sqrt{K_p(o,o)_p\sigma_1^2+(K_p(o,o)-K_p(x,o))^2}}\\
&\phi\left(\tfrac{\sqrt{K_p(o,o)}(\mu_p(o)+\sigma-\mu_p(x))}{\sqrt{K_p(o,o)\sigma_1^2+(K_p(o,o)-K_p(x,o))^2}}\right).\\
\end{aligned}
\end{equation}

Therefore, from \eqref{eq:millscond} and \eqref{eq:probitint2}--\eqref{eq:probitint3}, we obtain
\begin{equation}
\label{eq:millsthird42}
\begin{aligned}
&E[\nu(x)I_{\{\nu(x)>\nu(o)+\sigma\}}]=\mu_p(x)\\
&-\left(\mu_p(x)-\tfrac{K_p(x,o)}{K_p(o,o)}\mu_p(o)\right)\\
&\cdot \Phi\left(\tfrac{\sqrt{K_p(o,o)}(\mu_p(o)+\sigma-\mu_p(x))}{\sqrt{K_p(o,o)\sigma_1^2+(K_p(o,o)-K_p(x,o))^2}}\right)\\
& - \tfrac{K_p(x,o)}{K_p(o,o)} \mu_p(o)\Phi\left(\tfrac{\sqrt{K_p(o,o)}(\mu_p(o)+\sigma-\mu_p(x))}{\sqrt{K_p(o,o)\sigma_1^2+(K_p(o,o)-K_p(x,o))^2}}\right)\\
& - \tfrac{K_p(x,o)}{K_p(o,o)} \tfrac{\sqrt{K_p(o,o)}(K_p(o,o)-K_p(x,o))}{\sqrt{K_p(o,o)\sigma_1^2+(K_p(o,o)-K_p(x,o))^2}}\\
&\phi\left(\tfrac{\sqrt{K_p(o,o)}(\mu_p(o)+\sigma-\mu_p(x))}{\sqrt{K_p(o,o)\sigma_1^2+(K_p(o,o)-K_p(x,o))^2}}\right)\\
&+\tfrac{\sqrt{K_p(o,o)}\sigma^2_1}{\sqrt{K_p(o,o)\sigma_1^2+(K_p(o,o)-K_p(x,o))^2}}\\
&\phi\left(\tfrac{\sqrt{K_p(o,o)}(\mu_p(o)+\sigma-\mu_p(x))}{\sqrt{K_p(o,o)\sigma_1^2+(K_p(o,o)-K_p(x,o))^2}}\right)\\
&=\mu_p(x)\left(1-\Phi\left(\tfrac{\sqrt{K_p(o,o)}(\mu_p(o)+\sigma-\mu_p(x))}{\sqrt{K_p(o,o)\sigma_1^2+(K_p(o,o)-K_p(x,o))^2}}\right)\right)\\
& + \tfrac{\sqrt{K_p(o,o)}(K_p(x,x)-K_p(x,o))}{\sqrt{K_p(o,o)(\sigma_1^2+2\sigma^2)+(K_p(o,o)-K_p(x,o))^2}}\\
&\phi\left(\tfrac{\sqrt{K_p(o,o)}(\mu_p(o)+\sigma-\mu_p(x))}{\sqrt{K_p(o,o)\sigma_1^2+(K_p(o,o)-K_p(x,o))^2}}\right)\\
\end{aligned}
\end{equation}

Note that
\begin{equation}
\label{eq:var0042}
\begin{aligned}
&K_p(o,o)\sigma_1^2+(K_p(o,o)-K_p(x,o))^2\\
&=K_p(o,o)\left(K_p(x,x)-\tfrac{K_p^2(x,o)}{K_p(o,o)}\right)+(K_p(o,o)-K_p(x,o))^2\\
&=K_p(o,o)K_p(x,x)-K_p^2(x,o)\\
&+K_p^2(o,o)+K_p^2(x,o)-2K_p(x,o)K_p(o,o)\\
&=K_p(o,o)(K_p(x,x)+K_p(o,o)-2K_p(x,o)).\\
\end{aligned}
\end{equation}
Therefore, we have that
\begin{equation}
\label{eq:millsthird1142}
\begin{aligned}
&E[\nu(x)I_{\{\nu(x)>\nu(o)+\sigma\}}]=\\
&=\mu_p(x)\left(1-\Phi\left(\tfrac{(\mu_p(o)+\sigma-\mu_p(x))}{\sqrt{K_p(x,x)+K_p(o,o)-2K_p(x,o)}}\right)\right)\\
& + \tfrac{K_p(x,x)-K_p(x,o)}{\sqrt {K_p(x,x)+K_p(o,o)-2K_p(x,o)}}\\
&\phi\left(\tfrac{\mu_p(o)+\sigma-\mu_p(x)}{\sqrt{K_p(x,x)+K_p(o,o)-2K_p(x,o)}}\right)
\end{aligned}
\end{equation}
and 
\begin{equation}
\label{eq:millsthird1142sec}
\begin{aligned}
&E[\nu(o)I_{\{\nu(o)>\nu(x)+\sigma\}}]=\\
&=\mu_p(o)\left(1-\Phi\left(\tfrac{(\mu_p(x)+\sigma-\mu_p(o))}{\sqrt{K_p(x,x)+K_p(o,o)-2K_p(x,o)}}\right)\right)\\
& + \tfrac{K_p(o,o)-K_p(x,o)}{\sqrt {K_p(x,x)+K_p(o,o)-2K_p(x,o)}}\\
&\phi\left(\tfrac{\mu_p(x)+\sigma-\mu_p(o)}{\sqrt{K_p(x,x)+K_p(o,o)-2K_p(x,o)}}\right)
\end{aligned}
\end{equation}
The other two terms are:
\begin{equation}
\label{eq:millsthird1142sec}
\begin{aligned}
&E[\nu(o)I_{\{|\nu(o)-\nu(x)|\leq \sigma\}}]=\mu_p(o)\\
&-E[\nu(o)I_{\{\nu(o)>\nu(x)+\sigma\}}]-E[\nu(o)I_{\{\nu(x)>\nu(o)+\sigma\}}]\\
&=\mu_p(o)-\mu_p(o)\left(1-\Phi\left(\tfrac{(\mu_p(x)+\sigma-\mu_p(o))}{\sqrt{K_p(x,x)+K_p(o,o)-2K_p(x,o)}}\right)\right)\\
& - \tfrac{K_p(o,o)-K_p(x,o)}{\sqrt {K_p(x,x)+K_p(o,o)-2K_p(x,o)}}\\
&\phi\left(\tfrac{\mu_p(x)+\sigma-\mu_p(o)}{\sqrt{K_p(x,x)+K_p(o,o)-2K_p(x,o)}}\right)\\
&-\mu_p(o)\left(1-\Phi\left(\tfrac{(-\mu_p(x)+\sigma+\mu_p(o))}{\sqrt{K_p(x,x)+K_p(o,o)-2K_p(x,o)}}\right)\right)\\
& + \tfrac{K_p(o,o)-K_p(x,o)}{\sqrt {K_p(x,x)+K_p(o,o)-2K_p(x,o)}}\\
&\phi\left(\tfrac{-\mu_p(x)+\sigma+\mu_p(o)}{\sqrt{K_p(x,x)+K_p(o,o)-2K_p(x,o)}}\right)\\
&=\mu_p(o)-\mu_p(o)\Bigg(2-\Phi\left(\tfrac{(\mu_p(x)+\sigma-\mu_p(o))}{\sqrt{K_p(x,x)+K_p(o,o)-2K_p(x,o)}}\right)\\
&-\Phi\left(\tfrac{(-\mu_p(x)+\sigma+\mu_p(o))}{\sqrt{K_p(x,x)+K_p(o,o)-2K_p(x,o)}}\right)\Bigg)\\
& + \tfrac{K_p(o,o)-K_p(x,o)}{\sqrt {K_p(x,x)+K_p(o,o)-2K_p(x,o)}}\\
&\Bigg(\phi\left(\tfrac{-\mu_p(x)+\sigma+\mu_p(o)}{\sqrt{K_p(x,x)+K_p(o,o)-2K_p(x,o)}}\right)\\
&-\phi\left(\tfrac{\mu_p(x)+\sigma-\mu_p(o)}{\sqrt{K_p(x,x)+K_p(o,o)-2K_p(x,o)}}\right)\Bigg)\\
\end{aligned}
\end{equation}
and
\begin{equation}
\label{eq:millsthird1142sec1}
\begin{aligned}
&E[\nu(x)I_{\{|\nu(o)-\nu(x)|\leq \sigma\}}]=\\
&=\mu_p(x)-\mu_p(x)\Bigg(2-\Phi\left(\tfrac{(\mu_p(x)+\sigma-\mu_p(o))}{\sqrt{K_p(x,x)+K_p(o,o)-2K_p(x,o)}}\right)\\
&-\Phi\left(\tfrac{(-\mu_p(x)+\sigma+\mu_p(o))}{\sqrt{K_p(x,x)+K_p(o,o)-2K_p(x,o)}}\right)\Bigg)\\
& + \tfrac{K_p(x,o)-K_p(x,o)}{\sqrt {K_p(x,x)+K_p(o,o)-2K_p(x,o)}}\\
&\Bigg(\phi\left(\tfrac{-\mu_p(o)+\sigma+\mu_p(x)}{\sqrt{K_p(x,x)+K_p(o,o)-2K_p(x,o)}}\right)\\
&-\phi\left(\tfrac{\mu_p(o)+\sigma-\mu_p(x)}{\sqrt{K_p(x,x)+K_p(o,o)-2K_p(x,o)}}\right)\Bigg)\\
\end{aligned}
\end{equation}

\paragraph{Proof of Proposition \ref{prop:3}}
If $\bR$ has \textbf{no uncertainty} and $\bH$ is rational, then the expected payoffs can be computed from \eqref{eq:DEFexp1}, \eqref{eq:IMMexp1} and \eqref{eq:OFFexp1}.
  The values are
$E[DEF]=\max(\mu_p(x),\mu_p(o))$, 
  $E[IMM]=\mu_p(x)$ and $E[OFF]=\mu_p(o)$. Therefore, DEF is always optimal.
  
   If $S$ is \textbf{bounded-rational} and $R$ has \textbf{no uncertainty}, then $\sigma>0$. We consider three cases:
   (1) $\mu_p(x)>\mu_p(o)+\sigma$;
   (2) $\mu_p(o)>\mu_p(x)+\sigma$;
   (3) otherwise.

In case (1), the payoffs are:
   $E[DEF]=\mu_p(x)$,
  $E[IMM]=\mu_p(x)$ and $E[OFF]=\mu_p(o)$,
   Therefore,    DEF is  optimal. A similar results holds in case (2). In case (3), $E[DEF]=\{\mu_p(x)-\epsilon,\mu_p(o)-\epsilon\}$. Under the condition (A) or (B),  DEF will alwys be dominated.
If $S$ is \textbf{rational} and $R$ has \textbf{uncertainty}, then 
  DEF is optimal as in Proposition \ref{prop:1}
The last case follows directly from Lemma \ref{lem:2}.

\paragraph{Proof of Proposition \ref{prop:honestr}}
The only case where the content of the message is important is when DEF is not optimal. In this case, $\bR$ makes a decision autonomously.

Whenever DEF is not optimal, the best action can be either IMM$(x)$ if $\mu_p(x) > \mu_p(o)$ or OFF if $\mu_p(o) > \mu_p(x)$. Therefore, if $\bH$ sends a biased message such that $\bR$ estimates $\mu_p(x) > \mu_p(o)$ when, in reality, $\nu(x) < \nu(o)$, then $\bR$ would choose an action that is not optimal for $\bH$.

\paragraph{Proof of Proposition \ref{prop:4}}
This follows from Proposition \ref{prop:1} for the cases when $\boldsymbol{\nu}(o)$ and $\boldsymbol{\nu}(x)$ are comparable (i.e., one dominates the other).
If $S$ is \textbf{rational} and $R$ has \textbf{no uncertainty}, then if $\boldsymbol{\nu}(o)$ and $\boldsymbol{\nu}(x)$ are comparable, the payoff for DEF$(x)$ will be the best between $\boldsymbol{\nu}(o)$ and $\boldsymbol{\nu}(x)$ and, therefore DEF is not dominated.

If $S$ is \textbf{bounded-rational} and $R$ has \textbf{no uncertainty}, then if $\boldsymbol{\nu}(o)$ and $\boldsymbol{\nu}(x)$ are comparable, the payoff for DEF$(x)$ will never be optimal, because it will be $p\boldsymbol{\nu}(o)+(1-p)\boldsymbol{\nu}(x)$.
If $S$ is \textbf{rational} and $R$ has \textbf{ uncertainty}, then if $\boldsymbol{\nu}(o)$ and $\boldsymbol{\nu}(x)$ are comparable, the payoff for DEF$(x)$ will always be optimal as shown  in Proposition \ref{prop:1}. If $S$ is \textbf{bounded-rational} and $R$ has \textbf{ uncertainty}, the best decision depends on the specific case.

\bibliographystyle{ieeetr}
\bibliography{biblio}

\end{document}